\newcommand{\R}{\mathbb{R}}
\newcommand{\E}[2]{\ensuremath{\mathbb{E}_{#2}\left[#1\right]}}
\newcommand{\p}{\ensuremath{p}}
\newcommand{\KL}[2]{\ensuremath{D_{KL}(#1\mid\mid#2 )}}
\newcommand{\rnorm}[1]{\ensuremath{\lvert\lvert{#1}\rvert\rvert_{C_\theta(t)}^2}}
\newcommand{\modifiedrnorm}[1]{\ensuremath{\lvert\lvert{#1}\rvert\rvert_{C_{V(\nu^{(k)},\phi)}(t)}^2}}
\definecolor{mplblue}{HTML}{1f77b4}
\definecolor{mplorange}{HTML}{ff7f0e}
\definecolor{mplgreen}{HTML}{2ca02c}
\newtheorem{lemma}{Lemma}
\newtheorem{theorem}{Theorem}
\title{Amortized Reparametrization: Efficient and Scalable Variational Inference for Latent SDEs}
\author{%
  Kevin~Course\\
  University of Toronto\\
  \texttt{kevin.course@mail.utoronto.ca} \\
  % examples of more authors
  \And
  Prasanth~B.~Nair \\
  University of Toronto\\
  \texttt{prasanth.nair@utoronto.ca} \\
}
\begin{document}

\maketitle

\begin{abstract}
  We consider the problem of inferring latent stochastic 
  differential equations (SDEs)
  with a time and memory cost that scales independently with  
  the amount of data,
  the total length of the time series, 
  and the stiffness of the approximate differential equations.
  This is in stark contrast to typical methods for inferring latent 
  differential equations which, despite their constant memory cost, 
  have a time complexity that is heavily dependent on the 
  stiffness of the approximate differential equation.
  We achieve this computational advancement by removing the 
  need to solve differential equations when approximating 
  gradients using a novel amortization strategy
  coupled with a recently derived reparametrization of expectations
  under linear SDEs.
  We show that, in practice, this allows us to achieve similar performance 
  to methods based on adjoint sensitivities with more than
  an order of magnitude fewer evaluations of the model in training.
\end{abstract}

\section{Introduction}
Recent years have seen the rise of continuous time 
models for dynamical system modeling~\cite{chen_neural_2018}. 
As compared to traditional autoregressive style models~\cite{hochreiter_long_1997},
continuous time models are useful because 
they can deal with non-evenly spaced observations, 
they enable multilevel/hierarchical and adaptive prediction schemes, 
and because physics is (mostly) done in continuous time.
For example, recent developments in inferring continuous time 
differential equations from data has been met by 
a flurry of work in 
endowing models with physics informed priors~\cite{greydanus_hamiltonian_2019,cranmer_lagrangian_2020,course_weak_2020}.

Despite their advantages, continuous time models remain
significantly more computationally challenging to train than their 
autoregressive counterparts due to their 
reliance on adjoint methods for estimating gradients.
Adjoint methods introduce a significant 
computational burden in training 
because they require solving a pair of initial 
value problems to estimate gradients. 
Solving such initial value problems as a part of 
an iterative optimization procedure is computationally 
demanding for the following reasons:
\begin{enumerate}[(i)]
  \item Gradient based updates to models of differential equations 
    can cause them to become extremely stiff. This will have 
    the effect of causing the cost per iteration to explode
    mid-optimization.
  \item With the exception of parareal methods~\cite{lorin_derivation_2020},
    differential equation solvers are fundamentally 
    {\em iterative sequential} methods. This makes them 
    poorly suited to being parallelized on modern parallel 
    computing hardware. 
\end{enumerate}

In accordance with such challenges a number of methods
have been introduced to speed up training of continuous 
time models including
regularizing dynamics~\cite{kelly_learning_2020, finlay_how_2020} 
and replacing ordinary differential equation (ODE)
solvers with integration methods where possible~\cite{kidger_hey_2021}.
Despite the computational advancements brought about 
by such innovations, continuous time models 
remain expensive to train in comparison 
to discrete time models.

In addition to these computational challenges,
it is well-known that adjoint methods suffer 
from stability issues when approximating gradients of time 
averaged quantities over long time intervals 
for chaotic systems~\cite{lea_sensitivity_2000}. 

In the current work, we present a memory and time efficient 
method for inferring nonlinear, latent stochastic differential 
equations (SDEs) from high-dimensional time-series datasets.
In contrast to standard approaches for inferring latent
differential equations that rely on
adjoint sensitivities~\cite{chen_neural_2018,rubanova_latent_2019,li_scalable_2020},
our approach removes the requirement of solving
differential equations entirely.
We accomplish this advancement by coupling a novel 
amortization strategy with a recently derived 
reparametrization for expectations under 
Markov Gaussian processes~\cite{course_state_2023}.
We show that our approach can be used to approximate 
gradients of the evidence lower bound used to train 
latent SDEs with a 
time and memory cost that is 
independent of the amount of data, 
the length of the time series, and 
the stiffness of the approximate differential equations.
The asymptotic complexity for our approach is compared  
to well-known methods from the literature in 
Table~\ref{tab:time-and-memory-comparison}.
We note that our method has a constant cost 
that is chosen by the user. 
Moreover, we will show that our method is embarrassingly parallel
(i.e. all evaluations of the model can be performed in parallel over each 
iteration) whereas, we reiterate, 
differential equation solvers are iterative sequential methods.

\begin{table}[htbp]
  \begin{center}
  \begin{tabular}{lccc} \\ \toprule
    Method & Time & Memory\\ 
    \hline 
    Deterministic adjoints (Neural ODE)~\cite{chen_neural_2018} & $\mathcal{O}(J)$ & $\mathcal{O}(1)$ \\
    Stochastic adjoints~\cite{li_scalable_2020} & $\mathcal{O}(J\log J)$ & $\mathcal{O}(1)$\\
    Backprop through solver~\cite{giles_smoking_2006} & $\mathcal{O}(J)$ & $\mathcal{O}(J)$\\
    Amortized reparametrization (ours) & $\mathcal{O}(R)$ & $\mathcal{O}(R)$
          \\ \bottomrule
  \end{tabular}
\caption{\label{tab:time-and-memory-comparison}
Asymptotic complexity comparison for approximating gradients.
Units are given by the number of evaluations of the differential equation 
    (gradient field for ODEs or drift and diffusion function for SDEs).
    Here $J$ is the number
  of {\em sequential} evaluations 
  and $R$ is the number of {\em parallel} evaluations.
  $J$ is adaptively chosen by the differential equation solver and is 
    a function of the stiffness of the differential equation meaning it can change (and possibly explode)
    while optimizing. 
  In contrast, $R$ is a fixed constant used to control the variance of gradient approximations.
    While we could choose $R=1$ and would still arrive with unbiased approximations 
    for gradients, in 
  practice we found choosing $R\approx10^2$ worked well for the problems we considered.
  }
  \end{center}
\end{table}

The applications of our method span various standard generative modeling tasks, such as 
auto-encoding, denoising, inpainting, and super-resolution~\cite{kingma_auto-encoding_2014},
particularly tailored for high-dimensional time-series.
Crucially, the computational 
efficiency of our approach not only enables the allocation  
of more computational resources towards
hyperparameter tuning but also 
democratizes access to state-of-the-art methods by making 
them feasible to train on lower 
performance hardware.

In the next section we provide a description of the 
theory underpinning our work with the main result of a stochastic,
unbiased estimate for gradients appearing in Lemma~\ref{lemma:unbiased-gradient}.
In Section~\ref{sec:numerical-studies} we provide a number 
of numerical studies including learning latent neural SDEs from video
and performance benchmarking on a motion capture dataset.
Notably we show that
we are able to achieve comparable
performance to methods based on adjoints with more than {\bf one order
of magnitude} fewer evaluations of the model in training (Section~\ref{sec:lotka-volterra}).
We also demonstrate that our approach does not suffer 
from the numerical instabilities that plague adjoint methods for long 
time-series with chaotic systems (Section~\ref{sec:adjoints}).
Finally, we close with a discussion of the limitations 
of our approach as well as some suggestions for 
future work.
All code is available at
\href{https://github.com/coursekevin/arlatentsde}{github.com/coursekevin/arlatentsde}.

\section{Method}

\subsection{Problem description}
Consider a time-series dataset
$
  \mathcal{D} = \left\{(x_i, t_i)\right\}_{i=1}^N
$,
where $t_i\in\R$ is the time stamp associated with the
observation, $x_i\in\R^D$. For example $x_i$ may 
be an indirect observation of some underlying dynamical
system,
a video-frame, or a snapshot of a spatio-temporal field.
We assume that each observation was generated via 
the following process:
first a latent trajectory, $z(t)$, is sampled from a general, \emph{nonlinear} SDE 
with time-dependent diffusion,
\begin{equation}
  d{z} = f_\theta(t, z) dt + L_\theta(t) d\beta,
    \label{eq:generative-model}
\end{equation}
with initial condition $\p_\theta(z_0)$
and then each $x_i$ is generated from the 
conditional distribution $p_\theta(x \mid z(t_i))$ 
where $z(t_i)$ is the 
realization of the latent trajectory at time stamp $t_i$.
Here $f_\theta:\R\times \R^d \to \R^d$ is called the drift function,
$L_\theta:\R \to \R^{d\times d}$ is called the dispersion matrix, 
and $\beta $ denotes Brownian motion with diffusion 
matrix $\Sigma$. 
We will model both the drift function of the SDE and the conditional
distribution using neural networks in this work.
This combination of SDE and conditional distribution define the generative model 
that we wish to infer.

Like in the standard generative modeling setting, the ``true'' 
parameters defining the latent SDE and the conditional likelihood, 
as well as the specific realization of the latent state, 
remain unknown. 
In addition, the posterior over the latent state at a 
point in time, $p_\theta(z(t)\mid \mathcal{D})$, is intractable.
Our objectives are two-fold, we wish to:
(i)~infer a likely 
setting for the parameters, $\theta$, 
that is well-aligned with the observed data
and (ii)~infer a parametric model for the posterior
over the latent state at a particular point in time
as a function of a small window of the observations, for example
$q_\phi(z(t) \mid x_i, x_{i+1}, \dots, x_{i+M}) \approx
p_\theta(z(t) \mid \mathcal{D})$ for $t\in[t_i, t_{i+M}]$ and 
$1 \leq M << N$. This latent variable model is depicted 
in Figure~\ref{fig:generative-model}.
In this work we will tackle this problem statement 
using the machinery of stochastic variational inference~\cite{hoffman_stochastic_2013}.

\begin{figure}[htbp]
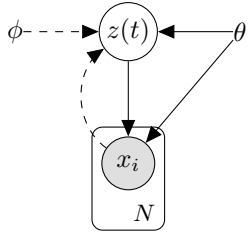

  \begin{center}
\tikz{
    % nodes
    \node[obs] (x) {$x_i$};%
    \node[latent,above=of x] (z) {${z}(t)$}; %
    \node[const, right=of z] (theta) {$\theta$};
    \node[const, left=of z] (phi) {$\phi$};
    % plate
     \plate {plate1} {(x)} {$N$}; %
    % edges
     \edge {z} {x};
     \edge{theta} {z, x};
      \edge[dashed] {phi} {z};
      \draw [->,dashed] (x) to [out=145,in=215] (z)}
  \caption{The solid lines indicate the data generating process
    that depends on the parameters, $\theta$. The dashed lines indicate
    the approximate variational posterior that depends on the parameters, $\phi$.}
  \label{fig:generative-model}
  \end{center}
\end{figure}

Before proceeding with an explanation of our approach, 
it is worthwhile taking a moment to consider 
why assuming the latent state is a realization of a 
SDE, as opposed to an ordinary differential equation (ODE) 
with random initial conditions~\cite{chen_neural_2018,rubanova_latent_2019,toth_hamiltonian_2020},
merits the additional mathematical machinery.
First, there is a long history in using SDEs in 
the physical sciences 
to model systems (even deterministic systems) that
lack predictability due to super-sensitivity to parameters 
and or initial conditions~\cite{glimm_stochastic_1997}.
One reason for the preference towards this 
modeling paradigm is that deterministic ODEs with random initial conditions only permit
uncertainty to enter in their initial condition.
This means that we are assuming that all randomness
which may affect the latent trajectory at 
all future points in time are precisely 
encoded in the uncertainty in the initial condition. 
In contrast, SDEs make the less restrictive 
assumption that uncertainty accumulates over time.

\subsection{Evidence lower bound}
As is the case with standard latent variable models,
training proceeds by specifying approximate posteriors over latent variables 
(in this case the latent state, $z(t)$) and then minimizing 
the Kullback-Leibler (KL) divergence between the approximate and true posterior.
An added complication in our specific case comes from 
the fact that the latent state is defined by a stochastic 
process rather than a random-vector as is more typical.
While it is possible to choose more general approximate posteriors~\cite{li_scalable_2020},
in this work we make the choice to approximate the posterior over
the latent state as a Markov Gaussian Process (MGP) defined by the linear 
SDE,
\begin{equation}
  dz = (-A_\phi(t)z + b_\phi(t))dt + L_\theta(t)d\beta,
\end{equation}
with Gaussian initial condition, $z(t_0)\sim \mathcal{N}(m_0, S_0)$ where 
$m_0\in\R^{d}$ and $S_0\in\R^{d \times d}$ is a symmetric positive definite. Here
$A_\phi:\R\to\R^{d\times d}$ and $b_\phi:\R\to\R^d$ are symmetric matrix 
and vector-valued functions of time, respectively.
Before proceeding, it is worth emphasizing that this choice of approximate 
posterior is not as restrictive as it may appear since it is 
only used to approximate the \emph{posterior} (i.e. the latent
state given the observations $p(z(t) \mid \mathcal{D})$ for $t\in[t_1, t_N]$). When forecasting
beyond the data window, we will use the nonlinear SDE in~\eqref{eq:generative-model}.
In this way, the approximate posterior primarily serves as a useful intermediary
in training the generative model rather than the end-product.

For our purposes, MGPs are useful because the 
marginal distribution $q_\phi(z(t)) = \mathcal{N}(m_\phi(t), S_\phi(t))$ (i.e. the distribution 
of the latent state at time $t$)
is a Gaussian whose mean and covariance 
are given by the solution to the following 
set of ODEs~\cite{sarkka_applied_2019},
\begin{equation}
  \begin{split}
    \dot{m}_\phi(t) &= -A_\phi(t) m_\phi(t) + b_\phi(t),\\
    \dot{S}_\phi(t) &= -A_\phi(t) S_\phi(t) - S_\phi(t) A_\phi(t)^T + L_{\theta}(t)\Sigma L_{\theta}(t)^T,\\
  \end{split}
\end{equation}
where $m(0)=m_0$ and $S(0) = S_0$.

Using the reparametrization trick from~\cite{course_state_2023},
the evidence lower bound (ELBO) can be written as,
\begin{equation}
  \begin{split}
    \mathcal{L}(\theta, \phi) =& 
    \sum_{i=1}^N \E{\log \p_\theta(x_i \mid z(t_i))}{z(t_i)\sim q_\phi(z(t))} \\
    &-\frac{1}{2} \int_0^T \E{\rnorm{r_{\theta,\phi}(z(t),t)} }{z(t)\sim q_\phi(z(t))}\,dt, 
  \end{split}
  \label{eq:elbo-svise}
\end{equation}
where
\begin{equation}
  \begin{split}
    r_{\theta,\phi}(z(t),t) =& B(t)(m_\phi(t)-z(t))
    + \dot{m}_\phi(t)- f_\theta(z(t), t),
  \end{split}
\end{equation}
$C_{\theta}(t) = (L_\theta(t) \Sigma L_\theta(t)^{T})^{-1}$,
$B(t) = \text{vec}^{-1}\left((S_\phi(t) \oplus S_\phi(t))^{-1}
    \text{vec}(C_\theta(t)^{-1} - \dot{S}_\phi(t))\right)$,
$\oplus$ indicates the Kronecker sum,
$\text{vec}:\R^{d\times d}\rightarrow \R^{d^2}$ maps
a matrix into a vector by stacking columns, and $\text{vec}^{-1}: \R^{d^2}\rightarrow\R^{d\times d}$
converts a vector into a matrix such that 
$\text{vec}^{-1}(\text{vec}(B)) = B$ $\forall B\in\R^{d\times d}$. 
See Appendices~\ref{app:identity} and \ref{app:elbo} for a detailed derivation. 
We note that computing the residual, $r_{\theta,\phi}$, scales 
linearly in the dimension of the state so long as $S_\phi(t)$ and $C_\theta(t)$ 
are diagonal.
To ensure our approach remains scalable, we will make this 
assumption throughout the remainder of this work.
In this case, we have
\begin{equation}
    B(t) = \frac{1}{2}S_\phi(t)^{-1} ( C_\theta(t)^{-1} - \dot{S}_\phi(t)).
\end{equation}

Often we may wish to place additional priors onto a subset of the generative
model parameters, $\theta$,
and infer their posterior using stochastic variational inference as well.
In this case we add the KL-divergence between the approximate posterior and prior 
onto the ELBO, ${KL}(q_\phi(\theta) \mid\mid p(\theta))$ and rewrite all 
expectations with respect to $q_\phi(z(t))$ and $q_\phi(\theta)$; details 
are provided in Appendix~\ref{app:priors-on-theta}.

\paragraph{Remark 1:} Despite having defined the approximate posterior in 
terms of an SDE with parameters $A_\phi$ and $b_\phi$, the ELBO 
only depends on the mean and covariance of the process at a particular 
point in time, $m_\phi$ and $S_\phi$. 
For this reason we can parametrize $m_\phi$ and $S_\phi$ directly while
implicitly optimizing with respect to $A_\phi$ and $b_\phi$. 
In addition, we can efficiently compute $\dot{m}_\phi$ and $\dot{S}_\phi$ using 
automatic differentiation. 
\looseness=-1

\paragraph{Remark 2:} Despite the fact that the prior and approximate 
posterior are SDEs, all expectations in the ELBO
are taken with respect to
{\em normal distributions}. 
Moreover, in contrast to the approach
in~\cite{archambeau_gaussian_2007,archambeau_variational_2007}
there are no differential equality constraints -- instead we have been left with an integral 
over the window of observations. 

Taken together, these observations allow us to infer the parameters of the 
generative model (a nonlinear, latent SDE with additive
diffusion~\eqref{eq:generative-model}), without the use of a forward solver.

\subsection{Amortization strategy}
The implicit assumption in the ELBO in~\eqref{eq:elbo-svise} is 
that the state mean and covariance will be approximated over the {\em entire}
window of observations.
This can pose a serious computational bottleneck  with 
long or complicated time-series. 
In this section we propose a novel amortization strategy
that will allow us to effectively eliminate this cost 
by requiring that we only approximate the posterior over 
short partitions of total the data time-window at once.

Rather than attempting to compute and store the posterior over the 
entire latent trajectory, we will instead construct an 
approximation to the posterior over a small window of observations 
as a function of those observations.
Consider a reindexing of the dataset by splitting it into $N/M$ non-overlapping 
partitions where $1 \leq M <<N$,
\begin{alignat*}{7}
  \text{original indexing: }\quad&[t_1,       &&t_2,       &&\dots,       &&t_M, &&t_{M+1}, \dots, &&t_N &&] \\
  \text{reindexed dataset: }\quad&[t_1^{(1)}, &&t_2^{(1)}, &&\dots, &&t_M^{(1)}, &&t_{1}^{(2)}, \dots, &&t_M^{(N/M)} &&] 
\end{alignat*}

In the case that $N$ is not evenly divisible by $M$ we allow the final split to contain
less elements.
We approximate the latent state over each partition using only the $M$ observations
in each partition,
$q_\phi(z(t)\mid x_1^{(j)}, x_{2}^{(j)}, \dots, x_{M}^{(j)}) \approx p(z(t)\mid \mathcal{D })$ 
for $t\in [t_1^{(j)}, t_{1}^{(j+1)}]$.
This can be interpreted as a probabilistic encoder over the time interval of the
partition of observations.
Letting $t_1^{N/M + 1} \equiv t_M^{N/M}$, 
the ELBO can be compactly rewritten as, 
$
  \mathcal{L}(\theta,\phi) = \sum_{j=1}^{N/M}\mathcal{L}^{(j)}(\theta, \phi),
  \label{eq:amortized-elbo}
$
where
\begin{equation}
  \begin{split}
    \mathcal{L}^{(j)}(\theta, \phi) =&\sum_{i=1}^M \E{\log p_\theta(x_i^{(j)}\mid z(t_i^{(j)}))}
                                                              {q_\phi(z(t_i^{(j)})\mid x_1^{(j)}, \dots, x_M^{(j)})}\\
    &-\frac{1}{2}\int_{t_1^{(j)}}^{t_1^{(j+1)}} \E{\rnorm{r_{\theta,\phi}(z,t)}}
    {q_\phi(z(t)\mid x_{1}^{(j)}, \dots, x_{M}^{(j)})}\,dt.
  \end{split}
    \label{eq:amortized-objective}
\end{equation}

An additional advantage of this amortization strategy is that it allows our
approach to scale to multiple trajectories without an increase to the overall
computational cost.
If there are multiple trajectories, we can reindex each trajectory 
independently and subsequently sum all sub loss functions.

To reiterate, the probabilistic encoder is a function which takes in $M$
observations from a particular partition along with a time stamp, $t$,
and outputs a mean vector and covariance matrix as an estimate for the latent state
at that particular time.
In principle, any function which can transform a batch of snapshots and a time
stamp into a mean and covariance could be used as an encoder in our work.
In our implementation, we use deep neural networks to encode each $x_i^{(j)}$
using $i\in\mathcal{I}$ where $\mathcal{I}\subset [x_1^{(j)}, x_2^{(j)}, \dots,
x_M^{(j)}]$ contains some temporal neighbours
of $x_i$ into a single latent vector.
This approach yields a set of latent vectors associated with each observation
in the partition $h_i$ for $i=1,2,\dots, M$.
We then interpolate between each latent vector using a deep kernel based
architecture to construct the posterior approximation for any time stamp
in the partition; see Appendix~\ref{app:encoder} for details.
We emphasize this is one choice of encoding architecture that we found 
convenient, it is straightforward to incorporate an autoregressive 
or transformer based encoder in our methodology~\cite{vaswani_attention_2017}

An important consideration is the selection of the partition parameter, $M$.
In practice, $M$ should be large enough so that the assumption of a linear SDE 
for the approximate posterior is appropriate (i.e. we have enough observations in a
partition so that the assumption of a Gaussian process over the latent state is
reasonable). For example, as we will see in an upcoming numerical study,
in the context of inferring latent SDEs from videos, we will need to choose
$M$ to be large enough so that we can reasonably infer both the position and
velocity of the object in the video.

\subsection{Reparametrization trick}\label{sec:reparam-trick}
While the previous sections have demonstrated 
how to eliminate the need for a differential equation
solver by replacing the initial value problem with an integral, 
in this section we show how the reparametrization trick
can be combined with the previously described amortization 
strategy to construct unbiased gradient approximations for the ELBO
with a time and memory cost that scales independently with the 
amount of data, the length of the time series, and the stiffness 
of the approximation to the differential equations.
Consider a reparametrization to the 
latent state of the form $z(t) = T(t, \epsilon, \phi)$ where 
$\epsilon \sim p(\epsilon)$ so that $z(t)\sim q_\phi(z(t)\mid x_1^{(j)}, 
x_{2}^{(j)}, \dots, x_{M}^{(j)})$. 
We can rewrite the second term in the evidence lower bound as,
\begin{equation}
  \begin{split}
  \int_{t_1^{(j)}}^{t_1^{(j+1)}} \E{\rnorm{r_{\theta,\phi}(z(t),t)}}{q_\phi(z(t))}\,dt &= 
    \int_{t_1^{(j)}}^{t_1^{(j+1)}} \E{\rnorm{r_{\theta,\phi}(T(t,\epsilon, \phi),t)}}{p(\epsilon)}\,dt\\
    &= ( t_1^{(j+1)} - t_1^{(j)} )\E{\rnorm{r_{\theta,\phi}(T(t,\epsilon, \phi),t)}}{p(\epsilon)p(t)}  
  \end{split}
\end{equation}
where $p(t)$ is a uniform distribution, $\mathcal{U}(t_1^{(j)}, t_1^{(j+1)})$ 
and $p(\epsilon)\sim\mathcal{N}(0,I)$ is a Gaussian.
With this rearrangement, we can derive the main result of this
work.
\begin{lemma} \label{lemma:unbiased-gradient}
  An unbiased approximation of the gradient of the evidence lower bound, 
  denoted as $\nabla_{\theta,\phi}\mathcal{L}(\theta,\phi)$, with an 
  $\mathcal{O}(R)$ time and memory cost can be formulated as follows:
\begin{equation}
  \begin{split}
    \nabla_{\theta,\phi}\mathcal{L}(\theta, \phi) \approx& \frac{N}{R}\sum_{i=1}^M \sum_{k=1}^{R} 
            \nabla_{\theta,\phi}\log p_\theta(x_i^{(j)}\mid T(t_i^{(j)}, \epsilon^{(k)}, \phi))\\
            &- ( t_1^{(j+1)} - t_1^{(j)} )\frac{N}{2R}\sum_{k=1}^R
                    \nabla_{\theta,\phi}  \lvert\lvert r_{\theta,\phi}(T(t^{(k)},\epsilon^{(k)}, \phi),t^{(k)})\rvert\rvert_{C_\theta(t^{(k)} ) }^2.
  \end{split}
\end{equation}
where each $t^{(k)} \sim \mathcal{U}(t_1^{(j)}, t_1^{(j+1)})$ 
and each $\epsilon^{(k)}\sim \mathcal{N}(0, I)$.
\end{lemma}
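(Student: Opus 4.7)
The plan is to assemble the estimator from three independent unbiased approximations---minibatch subsampling over the partition index $j$, the reparametrization trick for the Gaussian posterior expectations, and a uniform-sample Monte Carlo approximation of the time integral---and then to read off the cost bound from the resulting expression.

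First I would start from the partitioned ELBO $\mathcal{L}(\theta,\phi) = \sum_{j=1}^{N/M}\mathcal{L}^{(j)}(\theta,\phi)$ and draw a partition index $j$ uniformly from $\{1,\dots,N/M\}$, so that rescaling $\mathcal{L}^{(j)}$ by the number of partitions yields an unbiased estimator of $\mathcal{L}$. Within that partition, each likelihood expectation $\mathbb{E}_{q_\phi(z(t_i^{(j)})\mid\cdot)}[\log p_\theta(x_i^{(j)}\mid z(t_i^{(j)}))]$ is rewritten via the reparametrization $z(t_i^{(j)}) = T(t_i^{(j)},\epsilon,\phi)$ with $\epsilon\sim\mathcal{N}(0,I)$; the resulting expectation is over a parameter-free measure, so under mild regularity of $T$ and $\log p_\theta$ one may interchange $\nabla_{\theta,\phi}$ with $\mathbb{E}_{p(\epsilon)}$ and replace it by an $R$-sample Monte Carlo average over i.i.d.\ $\epsilon^{(k)}$, producing the first sum in the claimed estimator.

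For the residual integral I would invoke the elementary identity $\int_a^b g(t)\,dt = (b-a)\,\mathbb{E}_{t\sim\mathcal{U}(a,b)}[g(t)]$ with $a=t_1^{(j)}$ and $b=t_1^{(j+1)}$---precisely the rearrangement carried out immediately before the lemma statement---apply the same reparametrization $z(t)=T(t,\epsilon,\phi)$ inside the integrand, and combine to obtain a joint expectation over $p(\epsilon)\,p(t)$. Commuting the gradient with this expectation and drawing $R$ independent pairs $(t^{(k)},\epsilon^{(k)})$ yields the second sum, with the interval length $t_1^{(j+1)} - t_1^{(j)}$ appearing as the rescaling factor from the uniform distribution. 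Unbiasedness of the overall estimator then follows from linearity of expectation together with the unbiasedness of each of the three constituent approximations. The $\mathcal{O}(R)$ time and memory claim is immediate: each of the $R$ summands requires only pointwise evaluations of the encoder, the drift $f_\theta$, and the derivatives $\dot{m}_\phi, \dot{S}_\phi$ (obtainable by automatic differentiation at constant cost per sample); no temporal recursion or stored trajectory is needed, and the $R$ evaluations are independent and therefore embarrassingly parallel.

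The only technically delicate step is the exchange of gradient and expectation; standard dominated-convergence arguments suffice provided the drift, encoder, decoder, and reparametrization map $T$ are continuously differentiable with integrable derivative, which holds for smooth activations and can be extended to piecewise-smooth networks by the usual measure-zero considerations. The main bookkeeping subtlety, and the place I expect to spend the most care, is matching the exact prefactor: one must track how the $N/M$ partition weight, the sum over the $M$ likelihood terms inside a partition, and the $R$-sample Monte Carlo average combine, and whether the $\epsilon^{(k)}$ are shared across the $M$ likelihood terms or drawn independently per term. The underlying mathematical content, however, is simply the three-fold unbiased decomposition above.
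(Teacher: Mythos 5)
Your proposal is correct and takes essentially the same route as the paper, whose entire proof is the one-line remark that the result follows by applying the standard reparametrization trick to the amortized objective in~\eqref{eq:amortized-objective}; your three-part decomposition (partition subsampling, Gaussian reparametrization, and uniform Monte Carlo for the time integral, with gradient--expectation interchange justified by dominated convergence) is exactly what that sentence compresses. Your closing worry about the prefactor is well-placed: carrying the $N/M$ partition weight through the $M$-term likelihood sum and the $R$-sample average yields $N/(MR)$ rather than the $N/R$ printed in the lemma, so the bookkeeping you flag is the one point where your derivation and the stated constant must be reconciled.
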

The proof follows by applying the standard reparametrization 
trick~\cite{kingma_auto-encoding_2014} to estimating gradients of the 
amortized objective in~\eqref{eq:amortized-objective}.

\paragraph{Remark 1: } In practice we found 
choosing $R\sim 100$ worked well for the problems we 
considered.
Note that in terms of elapsed time, $100$ 
evaluations of this objective, which can be computed 
in parallel, is far cheaper than $100$ evaluations
of the SDE forward model evaluated as a part 
of an iterative sequential SDE solver. 
Moreover we found that adaptive stepping schemes
required far more evaluations of the SDE forward 
model than our stochastic approach (see Section~\ref{sec:lotka-volterra}).

\paragraph{Remark 2: }
In the case that 
evaluations of the SDE drift term were relatively 
cheap compared to decoder evaluations (for example in the case the dimension
of the latent state is much smaller than the dimension of the data), 
we found it useful to increase the number of samples used to approximate the 
integral over time without increasing the number of samples from the 
variational posterior.
To do so, we made use of a nested Monte Carlo scheme
to approximate the second term in the ELBO,
\begin{multline}
( t_1^{(j+1)} - t_1^{(j)} )\E{\rnorm{r_{\theta,\phi}(T(t,\epsilon, \phi),t)}}{p(\epsilon)p(t)}  \approx\\
            \frac{ t_1^{(j+1)} - t_1^{(j)} }{RS}\sum_{k=1}^R \sum_{l=1}^S 
                      \lvert\lvert r_{\theta,\phi}(T(t^{(k, l)},\epsilon^{(k)}, \phi),t^{(k, l)})\rvert\rvert_{C_\theta(t^{(k, l)})}^2,
          \label{eq:nested-monte-carlo}
\end{multline}
where, again, each $\epsilon^{(k)} \sim \mathcal{N}(0, I)$ and each 
$t^{(k,1)}, t^{(k, 2)}, \dots, t^{(k, S)}\sim \mathcal{U}(t_1^{(j)}, t_1^{(j+1)})$.
In addition, because the integral over time is one-dimensional we used 
stratified sampling to draw from $\mathcal{U}(t_1^{(j)}, t_1^{(j+1)})$
in order to further reduce the variance in the integral over time.
In this case we often 
found we could choose $R \sim 10$ and $S\sim 10$.
To be clear, \eqref{eq:nested-monte-carlo} is simply a method for variance 
reduction that we found to be useful; it is not a necessary component for our
approach.

\section{Limitations \& Related Work}

\paragraph{Summary of assumptions.}
In the previous sections we introduced an ELBO which, when maximized,
leaves us with a generative model in the form of a nonlinear, latent SDE 
with time-dependent diffusion and an approximation to the latent 
state over the time-window of observations in the form of a Gaussian process.
To reiterate, we only assume that the approximating posterior, i.e. the 
distribution over the latent state given a batch of observations, is a Gaussian 
process; this is an assumption that is commonly made in the context 
of nonlinear state estimation, for example~\cite{barfoot_state_2017,barfoot_exactly_2020}.
When making predictions, we sample from the nonlinear SDE which characterizes 
the generative model~\eqref{eq:generative-model}.

\paragraph{Stochastic adjoint sensitivities.}
Li et al.~\cite{li_scalable_2020} proposed the stochastic adjoint 
sensitivity method, 
enabling the inference of latent SDEs using a wide range of 
approximate posteriors over the latent state. 
In our work we choose to approximate the 
posterior over the latent state using a MGP which enables us to eliminate 
the requirement of solving any differential equations entirely; 
as we have discussed extensively this choice enables dramatic 
computational savings.
A limitation of our approach as compared to the stochastic 
adjoint sensitivities method is that our method should
only be used to approximate the posterior over the latent 
state when it is approximately a MGP.
Intuitively, this limitation is akin to the limitations 
of mean-field stochastic variational inference as 
compared to stochastic variational inference with an 
expressive approximate posterior such as 
normalizing flows~\cite{rezende_variational_2015}.
From our practical experience working on a range of 
test cases, this has not been a limiting factor.
It is worth reiterating that this limitation 
applies only to the approximate posterior over the time window 
of observations; the predictive posterior can be a complex 
distribution defined by a nonlinear SDE with a Gausian 
initial condition.

In addition, the stochastic adjoint sensitivity method %by~Li et al.~\cite{li_scalable_2020}
allows for state dependent diffusion processes whereas 
our approach only allows for a time dependent diffusion process. 
In cases where a state dependent diffusion process is deemed 
necessary, our approach could be used to provide 
a good initial guess for the parameters of the drift function.
It remains a topic of future 
work to determine if this limitation is mitigated by the fact 
that we are learning latent SDEs rather than SDEs in the 
original data space.
Across the range of 
test cases we considered, we have not encountered 
a problem for which the assumption of a time-dependent 
diffusion matrix was limiting.

\paragraph{Latent neural ODEs.}
Chen et al.~\cite{chen_neural_2018},
Rubanova et al.~\cite{rubanova_latent_2019}, and Toth et al.~\cite{toth_hamiltonian_2020} 
presented latent ordinary differential equations (ODEs) as generative 
models for high-dimensional temporal data. 
These approaches have two main limitations: 
(i) they encode all uncertainty in the ODE's 
initial condition and
(ii) they rely on adjoint sensitivities, 
necessitating the solution of a sequence of 
initial value problems during optimization. 
As was discussed previously, SDEs provide a more natural 
modeling paradigm for estimating uncertainty,
naturally capturing our intuition that 
uncertainty should accumulate over time~\cite{glimm_stochastic_1997}. 
Moreover, to reiterate, 
our work avoids solving differential equations entirely
by relying on unbiased approximations of a one-dimensional 
integral instead; as we will show, this can 
result in a dramatic decrease in the number of 
required function evaluations in training as compared to 
methods based on adjoints.
Moreover, we will show that our approach 
avoids the numerical instabilities of adjoint 
methods when they are used to approximate gradients
of time averaged quantities over long time intervals 
for chaotic systems.
It is worth mentioning that gradients computed by backpropagation 
of a forward solver are not consistent with the adjoint ODE in
general~\cite{alexe_discrete_2009} so we do not consider 
comparisons to such approaches here.

\paragraph{Weak form methods.} 
Methods for inferring continuous time models 
of dynamical systems using the weak form of the 
differential equations were introduced in the 
context of learning ODEs with linear dependence
on the parameters~\cite{schaeffer_sparse_2017,pantazis_unified_2019}.
More recently these methods were adapted for training 
neural ODEs more quickly than adjoint methods
for time-series prediction problems~\cite{course_weak_2020}.
These methods share some similarities to 
the present approach in how they achieve a computational 
speed-up -- both methods transform the problem 
of solving differential equations into a problem 
of integration.
In contrast to the present approach,
these methods only allow for one to learn 
an ODE in the data coordinates (i.e. they do not allow for 
one to infer an autoencoder and a set of differential equations
simultaneously). 
Moreover, these methods rely on a biased estimate 
for the weak form residual which will fail when 
observations become too widely spaced.
In contrast, in the present approach, we rely on unbiased 
approximations to the evidence lower bound.
Finally, these methods require the specification 
of a carefully designed test-space~\cite{messenger_weak_2021} --
a consideration not required by our approach.

\section{Numerical Studies}\label{sec:numerical-studies}
In this section we provide a number of numerical studies to demonstrate
the utility of our approach.
In the first study, we show that our approach can be used to train neural SDEs 
using far fewer evaluations of the model than adjoint methods.
In the second study, we consider the problem of parameter tuning for a chaotic system 
over long time intervals. We show that our approach does not suffer from the numerical 
instabilities which are known to cause issues with adjoint methods on problems such as these.
Finally we close this section with two practical test cases: the first demonstrating 
competitive performance on a motion capture benchmark and the second showing 
how our approach can be applied to learn neural SDEs from video.
An additional numerical study exploring the effect of the nested Monte Carlo 
parameter, $S$, is provided in Appendix~\ref{app:study-on-mc}.
Details on computing resources are provided
Appendix~\ref{app:computing-resources}. 
All code required to reproduce results and figures is provided at
\href{https://github.com/coursekevin/arlatentsde}{github.com/coursekevin/arlatentsde}.

\subsection{Orders of magnitude fewer function evaluations in training}\label{sec:lotka-volterra}
\begin{figure}[htb]
  \centering
\begin{subfigure}{.5\textwidth}
  \centering
  \includegraphics[width=1.\linewidth]{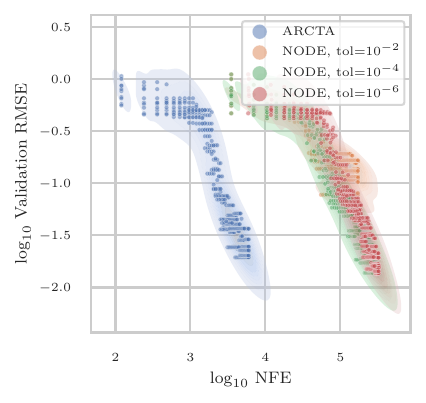}
  \label{fig:sub1}
\end{subfigure}%
\begin{subfigure}{.5\textwidth}
  \centering
  \includegraphics[width=1.\linewidth]{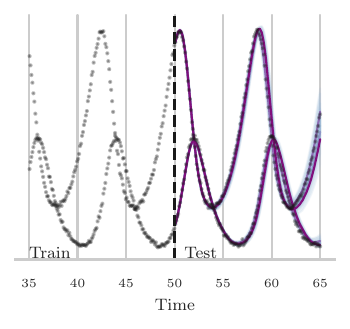}
  \label{fig:sub2}
\end{subfigure}
  \caption{Lotka-Volterra benchmarking result. In the left figure we see our method (\textcolor{mplblue}{ARCTA}) 
  requires more than 
  {\bf one order of magnitude} fewer evaluations of the model (NFE) than the standard 
  neural ODE
  (\textcolor{mplgreen}{NODE}) to achieve a similar validation accuracy. 
  In the right figure
  we have plotted a probabilistic prediction on the test set along with 
  three samples from the predictive distribution.}
\label{fig:NFE-comparison}
\end{figure}
In this numerical study we consider the task of building 
a predictive model from noisy observations of a predator-prey 
system. 
We simulated the Lotka-Volterra equations for $50$ seconds 
collecting data at a frequency of $10$Hz. 
Observations were 
corrupted by Gaussian noise with a standard deviation of $0.01$.
Validation data was collected 
over the time inverval $[50, 65]$ seconds.
We then attempt to build a predictive model from the data using 
a neural ODE (NODE) and our method,
amortized reparametrization for continuous time auto-encoding
(ARCTA), with the same model 
for the ODE and drift function respectively. 
To make comparisons with the NODE fair, we set the 
decoder to be the identity function.
We assume the diffusion matrix is constant 
and place a log-normal prior on its diagonal elements.
We approximate the posterior over these elements 
using a log-normal variational posterior. 
Details on the architecture and hyperparameters 
are provided in Appendix~\ref{app:arch-lotka-volterra}.
For this experiment, as well as subsequent experiments, 
we made use of the Adam optimizer~\cite{kingma_adam_2015}.

We considered three different tolerances on the NODE adaptive 
stepping scheme.
We trained our model as well as the NODEs using 10 different random seeds while
recording the validation RMSE and the number of evaluations of the model.
Looking to Figure~\ref{fig:NFE-comparison}, we see that our 
approach required more than an order of magnitude 
fewer evaluations of the model to achieve a similar RMSE 
on the validation set. 
This remains true even when the tolerance of the ODE solver 
is reduced such that the validation RMSE is substantially 
higher than our approach.

\subsection{Numerical instabilities of adjoints} \label{sec:adjoints}
It is well-known that adjoint based methods produce 
prohibitively large gradients for long time averaged quantities 
of chaotic systems~\cite{lea_sensitivity_2000} and accordingly
methods, such as least squares shadowing~\cite{wang_least_2014},
have been introduced to address such concerns. 
In this section we reproduce this phenomena on a simple parameter 
tuning problem and show that our approach does not 
suffer these same issues.

Given the parametric form of the chaotic Lorenz equations,
\begin{align}
  \dot{x} &= \sigma (y - x)\label{eq:dx-lorenz} \\
  \dot{y} &= x(\rho - z) - y \label{eq:dy-lorenz}\\ 
  \dot{z} &= x y - \beta z \label{eq:dz-lorenz}
\end{align}
along with an initial guess for the parameters, $\sigma_0$, 
$\rho_0$, and $\beta_0$, our goal is to tune the 
value of parameters such that they align with the observed data. 

\begin{wrapfigure}[23]{o}{0.50\textwidth}  
  \vspace{-20pt}
  \centering
  \includegraphics[width=1.0\linewidth]{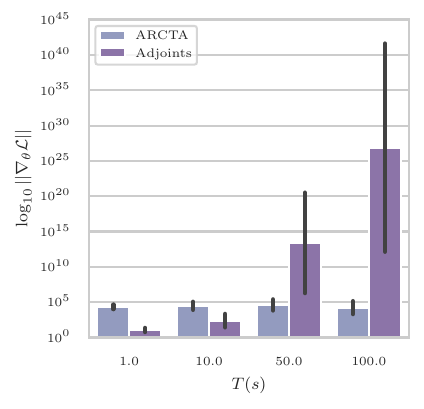}
  \caption{Stability of gradients in chaotic systems. The log-scale on vertical axis shows 
    our approach remains stable for longer time series, while adjoint-based 
    gradients become unusable at $50$ and $100$ seconds.}
  \label{fig:adjoints-instability}
\end{wrapfigure}
For this experiment we collect data at a frequency of 200Hz 
and corrupt observations by Gaussian noise with a covariance of $1$.
We generate five independent datasets over the time intervals
$[0, 1]$, $[0, 10]$, $[0, 50]$, and $[0, 100]$. For each dataset 
we generated an initial guess for the parameters by sampling 
from a Gaussian whose mean is the true value of the parameters 
and standard deviation is $20$\% of the mean.
For the adjoint methods we report the $\ell_2$-norm of the gradient 
with respect to the parameters at the initial guess. 
For our method (ARCTA) we optimize for $2000$ iterations (which tended
to be enough iterations to successfully converge to a reasonable 
solution) and report the average gradient across all iterations.
Details on hyperparameters and our architecture design are 
provided in Appendix~\ref{app:arch-adjoints}.
Results are summarized in Figure~\ref{fig:adjoints-instability}.
While adjoints expectedly provide prohibitively large gradients
as the length of the time series is increased, our approach 
remains numerically stable.

\subsection{Motion capture benchmark}\label{sec:mocap}
\begin{figure}[htb]
  \begin{center}
    \begin{minipage}{0.49\linewidth}
        \centering \begin{tabular}{c c}\\ \toprule
          Method & Test RMSE \\
          \hline
              DTSBN-S~\cite{gan_deep_2015} & $5.90 \pm 0.002^\dag$ \\
              npODE~\cite{heinonen_learning_2018} & $4.79^\dag$ \\
              NeuralODE~\cite{chen_neural_2018} & $4.74 \pm 0.093^\dag$ \\
              ODE$^2$VAE~\cite{yildiz_ode2vae_2019} & $3.17 \pm 0.221^\dag$ \\
              ODE$^2$VAE-KL~\cite{yildiz_ode2vae_2019} & $2.84 \pm 0.343^\dag$ \\
              Latent ODE~\cite{rubanova_latent_2019}& $2.45 \pm 0.057^\ast$ \\
              Latent SDE~\cite{li_scalable_2020} & $2.01 \pm 0.050^\ast$ \\
              ARCTA (ours) & $2.76 \pm 0.168$\\
           \bottomrule
        \end{tabular}
    \end{minipage} %
    \begin{minipage}{0.49\linewidth}
      \centering
      % \begin{figure}[htbp]
        \centering
        \includegraphics[width=\linewidth]{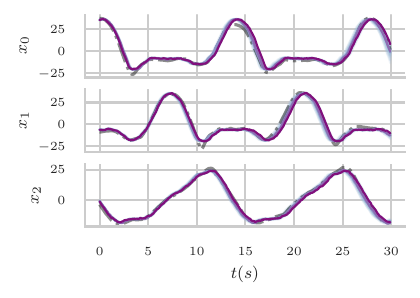}
        % \caption{<caption>}
        % \label{<label>}
      % \end{figure}
    \end{minipage}
    \caption{MOCAP benchmarking results, $^\dag$ from~\cite{yildiz_ode2vae_2019} 
    and $^\ast$ from~\cite{li_scalable_2020}.
    Our score is computed by training 10 models with different seeds and
      averaging on the test set.
    Looking to the table, we see
    that our method performs similarly to other state-of-the-art methods.
    The plot shows the predictive posterior on the test set for some select 
    outputs. Other benchmark results were compiled in~\cite{yildiz_ode2vae_2019, li_scalable_2020}.
    RMSE was computed from MSE by taking the square root of the mean and transforming 
    the error via a first-order Taylor-series approximation.
    }
    \label{fig:mocap-results}
  \end{center}
\end{figure}
In this experiment we consider the motion capture dataset
from~\cite{gan_deep_2015}.
The dataset consists of 16 training, 3 validation, 
and 4 independent test sequences of a subject walking.
Each sequence consists of $300$ time-series observations 
with a dimension of $50$.
We made use of the preprocessed data from~\cite{yildiz_ode2vae_2019}.
Like previous approaches tackling this dataset, 
we chose a latent dimension of $6$. 
We assume a Gaussian observation likelihood.
We place a log-normal prior on the diagonal elements 
of the diffusion matrix and the noise on the observations.
We approximate the posterior of the diffusion matrix 
and observation noise covariance using a log-normal 
approximate posterior.
Details on our architecture design and hyperparameter 
selection are provided in Appendix~\ref{app:arch-mocap}.

For our approach, we train 10 models and report their average performance 
on the test set due to the extremely limited number (4) of independent 
test sequences.
Looking to Figure~\ref{fig:mocap-results}, we see that 
our approach provided competitive performance on this 
challenging dataset. This result, in combination with 
those presented previously demonstrating we require fewer
function evaluations for similar forecasting accuracy 
and improved gradient stability for chaotic systems, 
make clear the utility of the present work.
It is possible to achieve state-of-the-art performance 
at a significantly reduced computational cost 
as compared to adjoint based methods.

\subsection{Neural SDEs from video}\label{sec:nsde-from-video}
\begin{figure}[htb]
  \centering
  \includegraphics[width=1.0\textwidth]{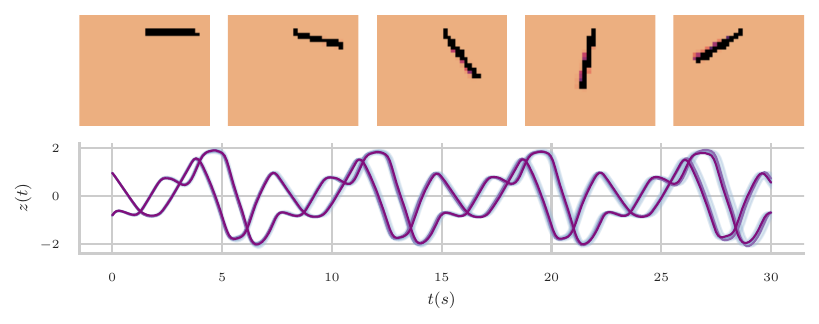}
  \caption{Neural SDEs from video. Here we used five frames to 
    estimate the intial state and then forecast in the latent 
    space for 30 seconds. The bottom plot shows 
  the latent SDE. The top row shows 10 samples 
  from the predictive posterior overlaid on the data.}
  \label{fig:neural-sdes-from-video}
\end{figure}
In this experiment we attempt to learn a latent SDE from video. 
We generated $32\times 32$ black and white frames of a nonlinear 
pendulum as it evolves for $30$ seconds collecting data at a 
frequency of $15$Hz. 
We transform the $1024$ dimensional state down to two dimensions
using a convolutional architecture.
Details on the hyperparameters and architecture are 
provided in Appendix~\ref{app:arch-nsde-video}.
This problem is similar to the problem considered in~\cite{greydanus_hamiltonian_2019}
except the dynamical system we consider is nonlinear.
In this prior work, the authors were forced to regularize the latent 
space so that one set of coordinates resembles a generalized velocity.
In the present work, no such regularization is required. 

We assume a Bernoulli likelihood on the pixels.
Like in previous numerical studies we place a log-normal 
prior on the diagonals of the diffusion term 
and approximate the posterior using a log-normal variational
distribution.
After training we generate a forecast that is visualized in 
Figure~\ref{fig:neural-sdes-from-video}.
We see that we were successfully able to build a generative 
model for this simple video. 
This result demonstrates the broad applicability 
of the present approach to standard generative modeling tasks. 

\section{Conclusions}
Here we have presented a method for 
constructing unbiased approximations to
gradients of the evidence lower bound used to train latent stochastic differential 
equations with a time and memory cost that scales 
independently with the amount of data, the 
length of the time-series, and the stiffness of the 
model for the latent differential equations.
We achieve this result by trading off the 
numerical precision of adaptive differential equation solvers 
with Monte-Carlo approximations to expectations using 
a novel amortization strategy and a recently derived change of variables 
for expectations under Markov Gaussian processes~\cite{course_state_2023}.

We have demonstrated the efficacy of our approach in learning 
latent SDEs
across a range of test problems. In particular we showed that our approach 
can reduce the number of function evaluations as compared to 
adjoint methods by more than one order of magnitude in training 
while avoiding the numerical instabilities 
of adjoint methods for long time series 
generated from chaotic systems.
In addition, we showed that our approach can be used for generative modeling 
of a simple video.

In the immediate future, there is significant room 
for future work in applying variance reduction schemes 
to the expectation over time to 
further reduce the total number of required 
function evaluations.
There are also opportunities to explore the utility of the proposed
approach for generative modeling on more realistic problems.
Finally, there are opportunities to apply our work in the 
context of implicit densities~\cite{kidger_neural_2021}.

\pagebreak
\begin{ack}
  This research is funded by a grant from NSERC.
\end{ack}

\newpage

\appendix
\makeatletter
\let\thetitle\@title
\makeatother

\begin{center}
  {\Large Appendices} \\
  [1em]
  {\bf \Large \thetitle }
\end{center}

\section{Expectations under linear SDEs}\label{app:identity}
In this section we re-derive a result from~\cite{course_state_2023}
regarding how to rewrite expectations under linear stochastic 
differential equations.
As we will explain in greater detail in Appendix~\ref{app:elbo}, 
it is this result that allowed~\cite{course_state_2023} to rewrite the 
ELBO entirely in terms of quantities that do not require differential
equation solvers.

\begin{theorem}
  Consider the density, $q(z(t))$, of the solution to a linear SDE
  $dz = (-A(t)z + b(t))dt + L(t) d\beta$ 
 with initial condition $z_0\sim\mathcal{N}(m_0, S_0)$,
    where $A: \R\to\R^{d\times d}$ is
    symmetric, $b: \R\to\R^{d}$,
  $L:\R \to \R^{d \times d}$, and $\beta$ indicates Brownian motion 
  with diffusion matrix $\Sigma$.
  Then the expected value of a bounded functional, $f$, satisfies
  \begin{equation}
    \E{f(A(t), b(t), z(t))}{z(t)\sim q(z(t))}\label{eq:general-expectation}
= \E{f(B(t),\dot{m}(t) + B(t) m(t), z(t))}{z(t)\sim\mathcal{N}(m(t), S(t))},
  \end{equation}
where $B(t) = \text{vec}^{-1}((S(t)\oplus S(t))^{-1}\text{vec}(L(t)\Sigma L(t)^T - \dot{S}(t)))$
  and $m(t)$ and $S(t)$ indicate the mean and covariance, respectively, of the
    SDE solution at time $t$.
\end{theorem}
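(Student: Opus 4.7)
The plan is to observe that the claimed identity reduces to showing that the drift parameters $A(t)$ and $b(t)$ of the linear SDE can be recovered pointwise from the mean function $m(t)$, covariance function $S(t)$, and dispersion term $L(t)\Sigma L(t)^T$. Since the marginal of a linear SDE at any fixed time $t$ is itself Gaussian, namely $z(t)\sim\mathcal{N}(m(t),S(t))$, the expectation on the left side of \eqref{eq:general-expectation} is already over this Gaussian. Thus it suffices to prove the pointwise identities $B(t)=A(t)$ and $\dot{m}(t)+B(t)m(t)=b(t)$, after which the equality of the two expectations is immediate by substitution inside the integrand $f$.

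First, I would recall the standard moment ODEs for a linear SDE (as cited from \cite{sarkka_applied_2019}):
\begin{equation}
\dot{m}(t) = -A(t)m(t) + b(t), \qquad \dot{S}(t) = -A(t)S(t) - S(t)A(t)^T + L(t)\Sigma L(t)^T.
\end{equation}
The first ODE gives $b(t) = \dot{m}(t) + A(t)m(t)$ directly, so if I can identify $A(t)$ with $B(t)$ then the drift term $b(t)$ also rewrites as claimed. For the covariance ODE, I would use the symmetry assumption $A(t)=A(t)^T$ to rewrite it as the continuous-time Lyapunov-type relation $A(t)S(t)+S(t)A(t) = L(t)\Sigma L(t)^T - \dot{S}(t)$.

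The main technical step is converting this matrix equation into a linear system in $\text{vec}(A(t))$. I would use the standard vec identities $\text{vec}(A(t)S(t)) = (S(t)\otimes I)\text{vec}(A(t))$ and $\text{vec}(S(t)A(t)) = (I\otimes S(t))\text{vec}(A(t))$, again relying on $S(t)=S(t)^T$, which together give
\begin{equation}
(S(t)\oplus S(t))\,\text{vec}(A(t)) = \text{vec}(L(t)\Sigma L(t)^T - \dot{S}(t)).
\end{equation}
Since $S(t)$ is symmetric positive definite, its eigenvalues are strictly positive, and the eigenvalues of $S(t)\oplus S(t)$ are their pairwise sums, so $S(t)\oplus S(t)$ is invertible. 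Inverting and applying $\text{vec}^{-1}$ to both sides yields exactly $A(t)=B(t)$, which is the definition of $B(t)$ in the statement.

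With $A(t)=B(t)$ and $b(t)=\dot{m}(t)+B(t)m(t)$ established pointwise, substituting into the argument of $f$ inside the expectation gives the right-hand side of \eqref{eq:general-expectation}; the law under which we integrate is unchanged because $q(z(t))=\mathcal{N}(m(t),S(t))$ is the marginal of the linear SDE. The only subtlety I anticipate is bookkeeping with the vec/Kronecker identities and justifying the symmetry assumption used to collapse $A(t)^T$ to $A(t)$ in the Lyapunov step; boundedness of $f$ is needed only to ensure the expectations are well-defined and need not enter the algebraic identification.
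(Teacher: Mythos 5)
Your proposal is correct and follows essentially the same route as the paper's proof: recall the moment ODEs for the linear SDE, recognize the covariance equation (using symmetry of $A$) as a Lyapunov equation, vectorize it to solve for $A(t)$ as $B(t)$, and rearrange the mean equation to recover $b(t)=\dot{m}(t)+B(t)m(t)$ before substituting into the expectation. The only difference is that you supply details the paper leaves implicit, namely the explicit vec/Kronecker identities and the invertibility of $S(t)\oplus S(t)$ via positive definiteness of $S(t)$.
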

\begin{proof}
The solution of a linear SDE defines a Markov Gaussian process 
with marginal statistics, $q(z(t))\sim \mathcal{N}(m(t), S(t))$,
given by the solution to the ODEs,
\begin{align}
    \dot{m}(t) &= (-A(t)m(t) + b(t)), \label{eq:linear-in-b} \\
    \dot{S}(t) &= -A(t) S(t) - S(t) A(t)^T + L(t) \Sigma L(t)^T 
  \label{eq:lyapunov-in-a},
\end{align}
with initial condition $m(0) = m_0$, $S(0) = S_0$~\cite{sarkka_applied_2019}.
  Noticing that equation \eqref{eq:lyapunov-in-a} defines a set of matrix Lyapunov 
  equations in terms of $A(t)$ allows us to express $A(t)$ as a function 
  of $S(t)$ as follows,
\begin{equation}
  A(t) = \text{vec}^{-1} \left( 
  (S(t) \oplus S(t)  )^{-1}
  \text{vec}(L(t) \Sigma L(t)^T - \dot{S}(t))
  \right), \label{eq:expresion-for-a}
\end{equation}
where $\oplus$ is called the Kronecker sum and is defined 
as $S \oplus S = I \otimes S + S\otimes I$ and $\otimes$ indicates 
the standard Kronecker product.
  Letting $B(t) = A(t)$ be the expression for $A(t)$ written in terms 
  of $S(t)$, we can rearrange Equation~\eqref{eq:linear-in-b} to solve for $b(t)$ as,
  \begin{equation}
    b(t) = \dot{m}(t) + B(t)m(t).
  \end{equation}
  Substituting the expressions for $A(t)$ and $b(t)$ into Equation \eqref{eq:general-expectation}
  yields the desired result.
\end{proof}

\paragraph{Remark }
In the case that 
$S(t)$ and $L(t)\Sigma L(t)^T$ are diagonal as we assume throughout 
this work, Equation~\ref{eq:expresion-for-a}
simplifies to, 
\begin{equation}
  A(t) = \frac{1}{2} S(t)^{-1} (L(t)\Sigma L(t)^T - \dot{S}(t)).
\end{equation}

\section{Evidence lower bound derivation}\label{app:elbo}
Given a dataset of observations,
$
  \mathcal{D} = \left\{(x_i, t_i)\right\}_{i=1}^N
$,
a generative likelihood, $p_\theta(x \mid z(t_i))$
 a prior SDE, $dz = f_\theta(z(t), t)dt + L_\theta(t) d\beta$,
 and an approximation to the posterior of the latent state,
 $dz = (-A_\phi(t)z(t) + b_\phi(t) ) dt + L_\theta(t) d\beta$, 
 it is possible to derive the ELBO~\cite{archambeau_gaussian_2007,li_scalable_2020},
\begin{equation}
  \begin{split}
    \mathcal{L}(\theta, \phi) =& 
    \sum_{i=1}^N \E{\log \p_\theta(x_i \mid z(t_i))}{z(t_i)\sim q_\phi(z(t))} \\
    &-\frac{1}{2} \int_0^T \E{\rnorm{-A_\phi(t)z(t) + b(t) - f_\theta(z(t), t)}}{z(t)\sim q_\phi(z(t))}\,dt, 
    \label{eq:elbo-derivation}
  \end{split}
\end{equation}
where $C_\theta(t) = (L_\theta(t) \Sigma L_\theta^T(t))^{-1}$.
The second term in~\eqref{eq:elbo-derivation} contains an integral over 
time of terms of the form which matches that of the expectation in~\eqref{eq:general-expectation}.
From this observation, Course and Nair~\cite{course_state_2023} applied the identity
in~\eqref{eq:general-expectation} to derive the reparametrized ELBO,
\begin{equation}
  \begin{split}
    \mathcal{L}(\theta, \phi) =& 
    \sum_{i=1}^N \E{\log \p_\theta(x_i \mid z(t_i))}{z(t_i)\sim q_\phi(z(t))} \\
    &-\frac{1}{2} \int_0^T \E{\rnorm{r_{\theta,\phi}(z(t),t)} }{z(t)\sim q_\phi(z(t))}\,dt, 
  \end{split}
\end{equation}
where,
\begin{equation}
  \begin{split}
    C_{\theta}(t) =& (L_\theta(t) \Sigma L_\theta(t)^{T})^{-1} \\
      r_{\theta,\phi}(z(t),t) =& B(t)(m_\phi(t)-z(t))
      + \dot{m}_\phi(t)- f_\theta(z(t), t)\\
      B(t) =& \text{vec}^{-1}\left((S_\phi(t) \oplus S_\phi(t))^{-1}
    \text{vec}(L_\theta(t) \Sigma L_\theta(t)^T - \dot{S}_\phi(t))\right).
  \end{split}
\end{equation}
To reiterate what was mentioned in the main body of the present work, 
the advantage of this reparametrized ELBO is that all expectations 
are now taken with respect to normal distributions -- this has effectively
eliminated the requirement of a differential equation solver.
Unfortunately this approach requires storing the entire estimate 
for the latent state -- making it scale poorly for long time series with 
complex dynamics. 
After amortizing as is suggested in the main body of the present work
we arrive at the final ELBO,
\begin{equation}
  \begin{split}
    \mathcal{L}(\theta, \phi) =&\sum_{j=1}^{N/M}\sum_{i=1}^M \E{\log p_\theta(x_i^{(j)}\mid z(t_i^{(j)}))}
                                                              {q_\phi(z(t_i^{(j)})\mid x_1^{(j)}, \dots, x_M^{(j)})}\\
    &-\frac{1}{2}\int_{t_1^{(j)}}^{t_1^{(j+1)}} \E{\rnorm{r_{\theta,\phi}(z,t)}}
    {q_\phi(z(t)\mid x_{1}^{(j)}, \dots, x_{M}^{(j)})}\,dt.
  \end{split}
\end{equation}
Again, as is discussed at length in the main body of the present 
work, the advantage of such an amortization strategy is that 
it is possible to construct an unbiased approximation to the gradients
of this ELBO that scales independently with the amount of the data, 
the length of the time series, and the stiffness of the underlying differential equations.

\section{Evidence lower bound with priors on generative parameters}\label{app:priors-on-theta}
In many circumstances, it will be advantageous to place priors on a subset 
of the parameters, $\theta$. 
Using the tools of stochastic variational inference, we can infer the posterior 
on these variables with only a marginal increase to the overall computational cost and 
with no impact to the asymptotic computational complexity.

For example, in all numerical studies we placed a log-normal prior 
on the diagonal of the diffusion process term $C_\theta^{-1}(t)$ and approximated 
the posterior using a log-normal approximate posterior, see Appendix~\ref{app:kl-divergence-log-normal} 
for details.
While not necessary, we found that doing so helped to stabilize training 
in the examples we considered. 
The particular choice 
of prior-posterior pair was made to ensure that the Kullback-Leibler (KL) divergence 
between the approximate posterior and the prior could be written in closed form; 
see Appendix~\ref{app:kl-divergence-log-normal}.
Given a particular choice of prior, $p(\theta)$, along with an approximate 
posterior, $q_\phi(\theta)$, we can amend the previously 
derived ELBO as,
\begin{equation}
  \begin{split}
    \mathcal{L}(\phi) =&\sum_{j=1}^{N/M}\sum_{i=1}^M \E{\log p_\theta(x_i^{(j)}\mid z(t_i^{(j)}))}
                                                              {q_\phi(\theta) q_\phi(z(t_i^{(j)})\mid x_1^{(j)}, \dots, x_M^{(j)})}\\
    &-\frac{1}{2}\int_{t_1^{(j)}}^{t_1^{(j+1)}} \E{\rnorm{r_{\theta,\phi}(z,t)}}
    {q_\phi(\theta) q_\phi(z(t)\mid x_{1}^{(j)}, \dots, x_{M}^{(j)})}\,dt - D_{KL}(q_\phi(\theta) \mid \mid p(\theta)),
  \end{split}
\end{equation}
where $D_{KL}(q \mid \mid p)$ indicates the KL divergence between $q$ and $p$.

Consider a reparametrization to the 
latent state as $z(t) = T(t, \epsilon, \phi)$ where 
$\epsilon \sim p(\epsilon) \implies z(t)\sim q_\phi(z(t)\mid x_1^{(j)}, 
x_{2}^{(j)}, \dots, x_{M}^{(j)})$. 
Also consider a reparametrization to the approximate posterior 
for the generative variables, $\theta = V(\nu, \phi)$ where $\nu \sim p(\nu) \implies
\theta \sim q_\phi(\theta)$. Then, building on Lemma~\ref{lemma:unbiased-gradient},
we can arrive at an unbiased estimate for the gradient of the modified 
ELBO which inherits all the properties discussed with the original approximation:
\begin{equation}
  \begin{split}
    \nabla_{\phi}\mathcal{L}(\phi) \approx& \frac{N}{R}\sum_{i=1}^M \sum_{k=1}^{R} 
            \nabla_{\phi}\log p_{V(\nu^{(k)}, \phi)}(x_i^{(j)}\mid T(t_i^{(j)}, \epsilon^{(k)}, \phi))\\
            &- ( t_1^{(j+1)} - t_1^{(j)} )\frac{N}{2R}\sum_{k=1}^R
                      \nabla_{\phi}  \modifiedrnorm{r_{V(\nu^{(k)},\phi),\phi}(T(t^{(k)},\epsilon^{(k)}, \phi),t^{(k)})}\\
            &-D_{KL}(q_\phi(\theta) \mid\mid p(\theta)).
  \end{split}
\end{equation}
where each $t^{(k)} \sim \mathcal{U}(t_1^{(j)}, t_1^{(j+1)})$,
each $\epsilon^{(k)}\sim \mathcal{N}(0, I)$,
and each $\nu^{(k)}\sim p(\nu)$.

\section{Detailed description of recognition network} \label{app:encoder}
This section details the deep kernel based encoder we used 
in all numerical studies. 
We found this particular encoding architecture to be useful 
for our purposes because it is interpretable and stable in training.
With this being said, any encoder which can transform a batch of observations 
down to a reduced dimension latent state can be used in combination 
with Lemma~\ref{lemma:unbiased-gradient} to arrive at an unbiased estimate
for the gradient of the ELBO which retains all the properties discussed in the main body 
of this work.

Given a dataset, $\mathcal{D}=\{(t_i, x_i)\}_{i=1}^N$, recall that the first step in our amortization strategy 
is to split the dataset into $N/M$ non-overlapping partitions,
\begin{alignat*}{7}
  \text{original indexing: }\quad&[t_1,       &&t_2,       &&\dots,       &&t_M, &&t_{M+1}, \dots, &&t_N &&] \\
  \text{reindexed dataset: }\quad&[t_1^{(1)}, &&t_2^{(1)}, &&\dots, &&t_M^{(1)}, &&t_{1}^{(2)}, \dots, &&t_M^{(N/M)} &&] 
\end{alignat*}
Recall that we would like to 
approximate the latent state over each partition using only the $M$ observations
in each partition,
$q_\phi(z(t)\mid x_1^{(j)}, x_{2}^{(j)}, \dots, x_{M}^{(j)}) \approx p(z(t)\mid \mathcal{D })$ 
for $t\in [t_1^{(j)}, t_{1}^{(j+1)}]$.
Going forward we will drop writing the superscript as we will be only working 
on a single partition, $(t_1, x_1), (t_2, x_2), \dots (t_M, x_M)$.
The user first selects how many snapshots into the future they would 
like to use to estimate the latent state at the present time, $K$; in our own 
studies we found choosing $K\in [1, 5]$ worked well for the problems we considered.

Given some encoding network, $\text{ENC}_{\phi}$, we compute:
\begin{equation}
  h_i = \text{ENC}_{\phi}(x_i, x_{i+1}, \dots, x_{i+K}).
\end{equation}
where $h_i\in\R^{2d}$ with $d < D$.
We will describe the particular architecture design for $\text{ENC}_\phi$ 
in the context of each numerical study in Appendix~\ref{app:detailed-exp-design}.
We note that for our approach, it is often important to use at 
least a small number of neighbours (i.e. $K>0$) when estimating the latent 
state because we are limited to approximating MGPs over the latent state.

To explain this point more clearly, let us consider the example of inferring 
a latent SDE using video of a pendulum as we did in Section~\ref{sec:nsde-from-video}.
If we choose a single frame near the center, we have no way 
of knowing from that frame alone if the pendulum is currently swinging left or 
right. In other words, if we were to build an encoder which takes in one 
single frame, the encoder should predict that the posterior given that 
single frame is multimodal. 
As our approach only allows for one to approximate the latent state 
using MGPs, this is not an option.
Allowing the encoder to take in a few frames at a time remedies this issue.
We also note that previous works for inferring latent differential 
equations made this same choice~\cite{chen_neural_2018,rubanova_latent_2019,li_scalable_2020,toth_hamiltonian_2020}.

Recall that we need to approximate the latent state at any time over 
the window of observations in the partition, $t \in [t_1^{(j)}, t_{1}^{(j+1)}]$.
To accomplish this we effectively interpolate between encodings 
using a deep kernel~\cite{wilson_deep_2016}. 
Letting,
\begin{equation}
  H = \begin{bmatrix}
    h_1^T \\ 
     h_2^T \\
    \vdots  \\
    h_M^T 
  \end{bmatrix},
\end{equation}
where $H\in\R^{M\times 2d}$ and $t_{\text{node}} = [t_i, t_1, \dots, t_M]$, we construct 
the encoder for the mean and diagonal of the covariance over the latent 
state as,
\begin{equation}
    \begin{bmatrix}
        m_\phi(t)^T & \log S_\phi(t)^T
    \end{bmatrix}
        = k_\phi(t,
        t_{\text{node}})^T(k_\phi(t_{\text{node}},t_{\text{node}})^{-1} +
        \sigma_n^2 I )^{-1} H.
        \label{eq:app-kernel-interp}
\end{equation}
Here we note that the right hand side of \eqref{eq:app-kernel-interp} is a row
vector of length $2d$. We use the notation $
    \begin{bmatrix}
        m_\phi(t)^T & \log S_\phi(t)^T
    \end{bmatrix}
    $
to indicate that $m_\phi(t)$ is given by the first $d$ elements of this vector 
and $\log S_\phi(t)$ is given by the next $d$ elements.
Here $k_\phi$ is a so-called deep kernel and $k_\phi(t,t_{\text{node}})\in\R^{M}$ and 
$k_\phi(t_{\text{node}},t_{\text{node}})\in\R^{M\times M}$. 
In addition $\sigma_n\in\R^{+}$ is tuned as a part of the optimization procedure.
While many options for the base 
kernel are possible, we made use of a squared exponential kernel, 
\begin{equation}
  k_\phi(t, t_*) = \sigma_f \exp \left( - \frac{\lvert\lvert 
  \text{DK}_\phi(t) - \text{DK}_\phi(t_*)
  \rvert\rvert^2 }{2 \ell^2} \right),
\end{equation}
where $\sigma_f,\ell \in \R^+$ are positive constants tuned as 
a part of the optimization procedure and $\text{DK}_\phi$ is
a neural network whose architecture we will describe 
in the context of the numerical studies.
While the base kernel is stationary, the neural networks 
allow for the encoder to infer non-stationary relationships~\cite{wilson_deep_2016}.
It is worth noting that without this deep-kernel our approach struggled 
to achieve good validation accuracy on the datasets we considered.

Advantages of this encoder design are that it can easily 
take in varying amounts of data, it is interpretable because 
$[m_\phi(t_i)^T \log S_\phi(t_i)^T ] \approx h_i^T$, and it is cheap to compute 
so long as $M$ is relatively small because evaluations of $\text{ENC}_{\phi}$
can be performed in parallel.
Particular choices for $\text{ENC}_\phi$ and $\text{DK}_\phi$ are  
described in context in Appendix~\ref{app:detailed-exp-design}.

\section{Approximate posterior on diffusion term}\label{app:kl-divergence-log-normal}
This section summarizes the log-normal parameterization we used to approximate the
posterior over the diagonal drift function terms in the main body of the paper.
First, we note that in the loss function we only require access to the product, 
$ C_\theta(t)^{-1} = L_\theta(t) \Sigma L_\theta(t)^T$ so, rather than parametrizing 
$L_\theta$ on its own, we parametrize $C_\theta^{-1}$.

Specifically we parametrize $C_\theta(t)^{-1} = \text{diag}(\theta)$,
where $\theta \in\R^d$.
The prior is defined as,
\begin{align}
    p(\theta) = \prod_{i=1}^d \mathcal{LN}(\theta_i \mid \tilde{\mu}_i, \tilde{\sigma}_i^2).
\end{align}
We parametrize the approximate posterior as,
\begin{align}
    q_\phi(\theta) = \prod_{i=1}^d \mathcal{LN}(\theta_i \mid \mu_i, \sigma_i^2),
\end{align}
where $\mu_i$ and $\sigma_i$ are the variational parameters. The KL divergence
between the posterior and prior is given by,
\begin{align}
    \KL{q_\phi(\theta)}{\p(\theta)} = \sum_{i=1}^d (\log \tilde{\sigma}_i - \log \sigma_i )
    -\frac{1}{2}\left( d 
    -\sum_{i=1}^d\frac{\sigma_i^2 - (\mu_i-\tilde{\mu}_i)^2}{\tilde{\sigma}_i^2}  \right).
\end{align}

\section{Computing resources}\label{app:computing-resources}
Experiments were performed on an Ubuntu server
with a 
dual E5-2680 v3 with a total of 24 cores,
128GB of RAM, 
and an NVIDIA GeForce RTX 4090 GPU.
The majority of our code is written in PyTorch~\cite{paszke_pytorch_2019}.
For benchmarking we made use of \verb|torchdiffeq|~\cite{chen_neural_2018},
\verb|torchsde|~\cite{li_scalable_2020, kidger_neural_2021}, 
and \verb|pytorch_lightning|.
All code is available at
\href{https://github.com/coursekevin/arlatentsde}{github.com/coursekevin/arlatentsde}.

\section{Details on numerical studies}\label{app:detailed-exp-design}
This section contains more details on the numerical studies including 
the specific architecture design and hyperparameter selection for each
experiment. For each experiment we design an encoder consisting of 
two neural networks, $\text{ENC}_{\phi}(x_i,\dots, x_{i+K} )$ 
and $\text{DK}_\phi(t)$ (see Appendix~\ref{app:encoder}),
a decoder $\text{DEC}_\theta(z(t))$,
and a model for the SDE consisting of a drift, $f_\theta(t, z)$, and dispersion 
matrix, $L_\theta(t)$.
For all numerical studies we place a log-normal prior on the dispersion matrix 
and assume that the approximate posterior is constant in time, 
see Appendix~\ref{app:kl-divergence-log-normal}.
For all experiments we gradually increased the value of the KL-divergence (both the 
KL-divergence due to the SDE prior and the KL-divergence on the dispersion matrix parameters)
from $0$ to $1$ using a linear schedule.

\subsection{Orders of magnitude fewer function evaluations}\label{app:arch-lotka-volterra}
In this section we provide a more detailed description of the numerical
study in Section~\ref{sec:lotka-volterra}.
As a reminder, we tasked a neural ODE (NODE) and our approach with building a predictive 
model for the Lotka-Volterra system given a dataset of time-series observations. 
The Lotka-Volterra equations are a system of 
nonlinear ODEs usually written as,
\begin{equation}
  \begin{split}
    \dot{x} &= \alpha x - \beta x y, \\
    \dot{y} &= \delta x y - \gamma y.
  \end{split}
\end{equation}
In our experiment we chose $\alpha = 2/3$, $\beta=4/3$, and $\delta=\gamma=1$.
We also assumed that there was some small amount of Brownian noise
given by $\Sigma = \text{diag}(10^{-3}, 10^{-3})$.
Using the initial condition $x = 0.9$ and $y=0.2$, we draw a sample 
from the system using 
the default adaptive stepping scheme in 
\verb|torchsde| from time $0$ to $65$ with an initial step size of $0.1$
and an absolute and relative tolerance of $10^{-5}$.
We then evaluate the solution at a frequency of $50$Hz and added Gaussian noise 
with a standard deviation of $0.01$. 
We use the 
first $50$ seconds for training and reserve the remaining $15$ seconds
for validation.

Both the NODE and our approach use the same model for the gradient 
field and drift function respectively, see Figure~\ref{fig:lotka-volterra-drift-arch}. 
The encoder and deep kernel architecture are provided in Figures~\ref{fig:lotka-volterra-enc-arch}
and~\ref{fig:lotka-volterra-dk-arch} respectively.
As mentioned in the main body of the paper, we set the decoder to be the identity 
function so as to force our model to learn the dynamics in the original coordinates.
We selected a Gaussian likelihood with a constant standard deviation of $0.01$.
\begin{figure}[htbp]
  \centering
  \begin{subfigure}[t]{0.3\textwidth}
  \centering
    \begin{tikzpicture}[
    layer/.style={
        rectangle,
        draw=black,
        very thick,
        minimum width=2cm,
        align=center,
        rounded corners
    },
    arrow/.style={
        ->,
        >=latex,
        very thick
    }
]

\node[layer,fill={rgb,1:red,0.5529411764705883;green,0.8274509803921568;blue,0.7803921568627451},draw={rgb,1:red,0.49411764705882355;green,0.7411764705882353;blue,0.7019607843137254}] (x00) at (-1.15,0.0) {$x$};
\node[layer,fill={rgb,1:red,1.0;green,1.0;blue,0.7019607843137254},draw={rgb,1:red,0.8980392156862745;green,0.8980392156862745;blue,0.6313725490196078}] (FC6410) at (-1.15,-1.0) {FC-64};
\node[layer,fill={rgb,1:red,1.0;green,1.0;blue,0.7019607843137254},draw={rgb,1:red,0.8980392156862745;green,0.8980392156862745;blue,0.6313725490196078}] (ReLU20) at (-1.15,-2.0) {ReLU};
\node[layer,fill={rgb,1:red,1.0;green,1.0;blue,0.7019607843137254},draw={rgb,1:red,0.8980392156862745;green,0.8980392156862745;blue,0.6313725490196078}] (FC6430) at (-1.15,-3.0) {FC-64};
\node[layer,fill={rgb,1:red,1.0;green,1.0;blue,0.7019607843137254},draw={rgb,1:red,0.8980392156862745;green,0.8980392156862745;blue,0.6313725490196078}] (ReLU40) at (-1.15,-4.0) {ReLU};
\node[layer,fill={rgb,1:red,1.0;green,1.0;blue,0.7019607843137254},draw={rgb,1:red,0.8980392156862745;green,0.8980392156862745;blue,0.6313725490196078}] (FC6450) at (-1.15,-5.0) {FC-64};
\node[layer,fill={rgb,1:red,1.0;green,1.0;blue,0.7019607843137254},draw={rgb,1:red,0.8980392156862745;green,0.8980392156862745;blue,0.6313725490196078}] (ReLU60) at (-1.15,-6.0) {ReLU};
\node[layer,fill={rgb,1:red,1.0;green,1.0;blue,0.7019607843137254},draw={rgb,1:red,0.8980392156862745;green,0.8980392156862745;blue,0.6313725490196078}] (FC270) at (-1.15,-7.0) {FC-2};
\node[layer,fill={rgb,1:red,0.7450980392156863;green,0.7294117647058823;blue,0.8549019607843137},draw={rgb,1:red,0.6705882352941176;green,0.6549019607843137;blue,0.7686274509803922}] (driftfunction80) at (-1.15,-8.0) {drift function};
\draw[arrow] (x00) -- (FC6410);
\draw[arrow] (FC6410) -- (ReLU20);
\draw[arrow] (ReLU20) -- (FC6430);
\draw[arrow] (FC6430) -- (ReLU40);
\draw[arrow] (ReLU40) -- (FC6450);
\draw[arrow] (FC6450) -- (ReLU60);
\draw[arrow] (ReLU60) -- (FC270);
\draw[arrow] (FC270) -- (driftfunction80);\end{tikzpicture}
    \caption{$f_\theta(t, x)$}
  \label{fig:lotka-volterra-drift-arch}
  \end{subfigure}\hfill
  \begin{subfigure}[t]{0.4\textwidth}
  \centering
    \begin{tikzpicture}[
    layer/.style={
        rectangle,
        draw=black,
        very thick,
        minimum width=2cm,
        align=center,
        rounded corners
    },
    arrow/.style={
        ->,
        >=latex,
        very thick
    }
]

\node[layer,fill={rgb,1:red,0.5529411764705883;green,0.8274509803921568;blue,0.7803921568627451},draw={rgb,1:red,0.49411764705882355;green,0.7411764705882353;blue,0.7019607843137254}] (x_i00) at (-1.15,0.0) {$x_i$};
\node[layer,fill={rgb,1:red,1.0;green,1.0;blue,0.7019607843137254},draw={rgb,1:red,0.8980392156862745;green,0.8980392156862745;blue,0.6313725490196078}] (FC3210) at (-1.15,-1.0) {FC-32};
\node[layer,fill={rgb,1:red,1.0;green,1.0;blue,0.7019607843137254},draw={rgb,1:red,0.8980392156862745;green,0.8980392156862745;blue,0.6313725490196078}] (ReLU20) at (-1.15,-2.0) {ReLU};
\node[layer,fill={rgb,1:red,1.0;green,1.0;blue,0.7019607843137254},draw={rgb,1:red,0.8980392156862745;green,0.8980392156862745;blue,0.6313725490196078}] (FC3230) at (-1.15,-3.0) {FC-32};
\node[layer,fill={rgb,1:red,1.0;green,1.0;blue,0.7019607843137254},draw={rgb,1:red,0.8980392156862745;green,0.8980392156862745;blue,0.6313725490196078}] (ReLU40) at (-1.15,-4.0) {ReLU};
\node[layer,fill={rgb,1:red,1.0;green,1.0;blue,0.7019607843137254},draw={rgb,1:red,0.8980392156862745;green,0.8980392156862745;blue,0.6313725490196078}] (FC450) at (-1.15,-5.0) {FC-4};
\node[layer,fill={rgb,1:red,0.7450980392156863;green,0.7294117647058823;blue,0.8549019607843137},draw={rgb,1:red,0.6705882352941176;green,0.6549019607843137;blue,0.7686274509803922}] (mt_i60) at (-2.3,-6.0) {$m(t_i)$};
\node[layer,fill={rgb,1:red,0.7450980392156863;green,0.7294117647058823;blue,0.8549019607843137},draw={rgb,1:red,0.6705882352941176;green,0.6549019607843137;blue,0.7686274509803922}] (logSt_i61) at (0.0,-6.0) {$\log S(t_i)$};
\draw[arrow] (x_i00) -- (FC3210);
\draw[arrow] (FC3210) -- (ReLU20);
\draw[arrow] (ReLU20) -- (FC3230);
\draw[arrow] (FC3230) -- (ReLU40);
\draw[arrow] (ReLU40) -- (FC450);
\draw[arrow] (FC450) -- (mt_i60);
\draw[arrow] (FC450) -- (logSt_i61);\draw[arrow] (x_i00) to [out=200,in=110] (mt_i60);\end{tikzpicture}
    \caption{$\text{ENC}_\phi(x_i)$}
  \label{fig:lotka-volterra-enc-arch}
  \end{subfigure}\hfill
  \begin{subfigure}[t]{0.3\textwidth}
  \centering
    \begin{tikzpicture}[
    layer/.style={
        rectangle,
        draw=black,
        very thick,
        minimum width=2cm,
        align=center,
        rounded corners
    },
    arrow/.style={
        ->,
        >=latex,
        very thick
    }
]

\node[layer,fill={rgb,1:red,0.5529411764705883;green,0.8274509803921568;blue,0.7803921568627451},draw={rgb,1:red,0.49411764705882355;green,0.7411764705882353;blue,0.7019607843137254}] (t00) at (-1.15,0.0) {$t$};
\node[layer,fill={rgb,1:red,1.0;green,1.0;blue,0.7019607843137254},draw={rgb,1:red,0.8980392156862745;green,0.8980392156862745;blue,0.6313725490196078}] (FC3210) at (-1.15,-1.0) {FC-32};
\node[layer,fill={rgb,1:red,1.0;green,1.0;blue,0.7019607843137254},draw={rgb,1:red,0.8980392156862745;green,0.8980392156862745;blue,0.6313725490196078}] (ReLU20) at (-1.15,-2.0) {ReLU};
\node[layer,fill={rgb,1:red,1.0;green,1.0;blue,0.7019607843137254},draw={rgb,1:red,0.8980392156862745;green,0.8980392156862745;blue,0.6313725490196078}] (FC3230) at (-1.15,-3.0) {FC-32};
\node[layer,fill={rgb,1:red,1.0;green,1.0;blue,0.7019607843137254},draw={rgb,1:red,0.8980392156862745;green,0.8980392156862745;blue,0.6313725490196078}] (ReLU40) at (-1.15,-4.0) {ReLU};
\node[layer,fill={rgb,1:red,1.0;green,1.0;blue,0.7019607843137254},draw={rgb,1:red,0.8980392156862745;green,0.8980392156862745;blue,0.6313725490196078}] (FC150) at (-1.15,-5.0) {FC-1};
\node[layer,fill={rgb,1:red,0.7450980392156863;green,0.7294117647058823;blue,0.8549019607843137},draw={rgb,1:red,0.6705882352941176;green,0.6549019607843137;blue,0.7686274509803922}] (nonstationarytransform60) at (-1.15,-6.0) {nonstationary transform};
\draw[arrow] (t00) -- (FC3210);
\draw[arrow] (FC3210) -- (ReLU20);
\draw[arrow] (ReLU20) -- (FC3230);
\draw[arrow] (FC3230) -- (ReLU40);
\draw[arrow] (ReLU40) -- (FC150);
\draw[arrow] (FC150) -- (nonstationarytransform60);\end{tikzpicture}
    \caption{$\text{DK}_\phi(t)$}
  \label{fig:lotka-volterra-dk-arch}
  \end{subfigure}
  \caption{
  Architecture diagrams for the drift, deep kernel, and encoder
  used in the Lotka-Volterra problem 
  are provided in Figures~(a), (b), and (c) respectively.
  Note that we have used the shorthand $m(t_i)$, $\log S(t_i)$ 
  to show how we have split the columns of $h_i$ in two. The value 
  of $[m(t_i), \log S(t_i)]$ only $\approx h_i$ unless $\sigma_n = 0$, see Appendix~\ref{app:encoder}.
  Note the arrow from $x_i$ to $m(t_i)$ indicates 
  a residual connection (which was useful in this case because we are learning
  a SDE in the original data coordinates).}
  
\end{figure}
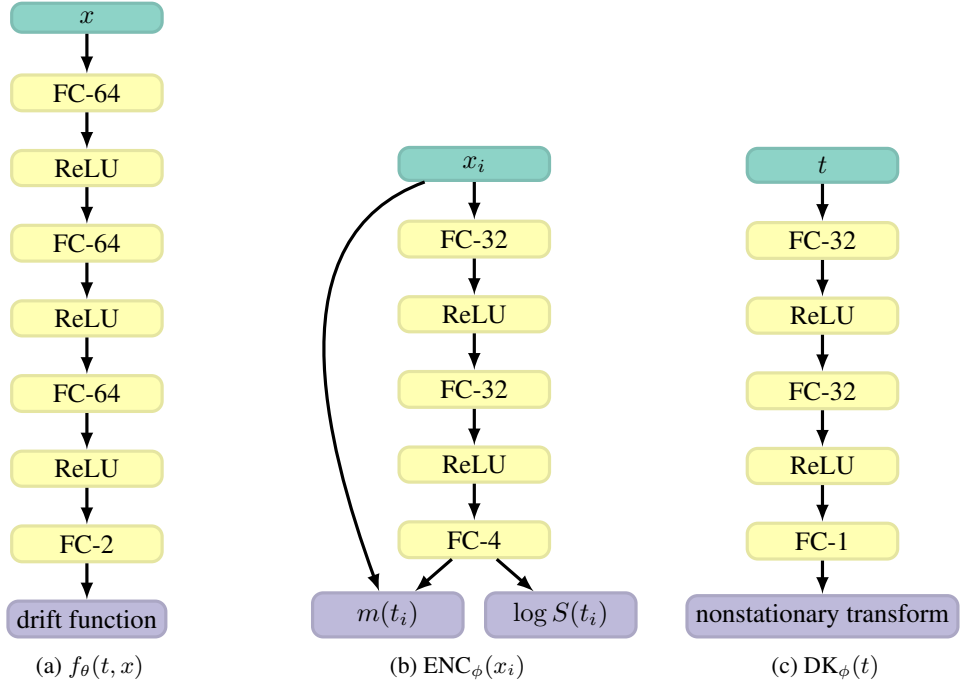

In terms of hyperparameters we set the schedule on the KL-divergence to increase 
from $0$ to $1$ over $1000$ iterations.
We choose a learning rate of $10^{-3}$ with 
exponential learning rate decay where the learning rate 
was decayed $lr = \gamma lr$ every iteration with $\gamma = 
       \exp(\log(0.9) / 1000)$ (i.e. the effective rate of learning rate decay is 
       $lr=0.9lr$ every $1000$ iterations.).
We used the nested Monte-Carlo approximation described in Equation~\eqref{eq:nested-monte-carlo}
with $R=1$, $S=10$, and $M=256$.
In terms of kernel parameters, we initialized $\ell=10^{-2}$, $\sigma_f = 1$, and $\sigma_n = 10^{-5}$.
In terms of the diffusion term, set ${\mu}_i = \sigma_i = 10^{-5}$ and $\tilde{\mu}_i=\tilde{\sigma}_i = 1$.

\subsection{Adjoint instabilities experiment} \label{app:arch-adjoints}
In this section, we provide some additional details of the
numerical study described in Section~\ref{sec:adjoints}.
Recall the parametric model for the Lorenz system in equations~(\ref{eq:dx-lorenz}--\ref{eq:dz-lorenz}).
As a reminder, given time-series dataset of observations, our goal was to infer the value 
of the parameters, $\sigma$, $\beta$, $\rho$, which were likely to have generated the data 
starting from an initial guess: $\theta_0 = [\sigma_0$, $\beta_0$, $\rho_0]$.
The true value of the parameters was chosen as $\theta_*=[10,8/3,28]$.
For all experiments we used the initial condition $[8,-2,36.05]$ as was
suggested in~\cite{lea_sensitivity_2000}.
We generated data by solving the differential equation using \verb|scipy|'s
RK4(5) initial value problem solver with a relative and absolute tolerance of $10^{-6}$ and $10^{-8}$ 
respectively.
We generated data at a frequency of $200$Hz
over the time intervals
$[0, 1]$, $[0, 10]$, $[0, 50]$, and $[0, 100]$.
For each time interval we generated 5 datasets by adding independent Gaussian 
noise with a variance of 1 to the data.
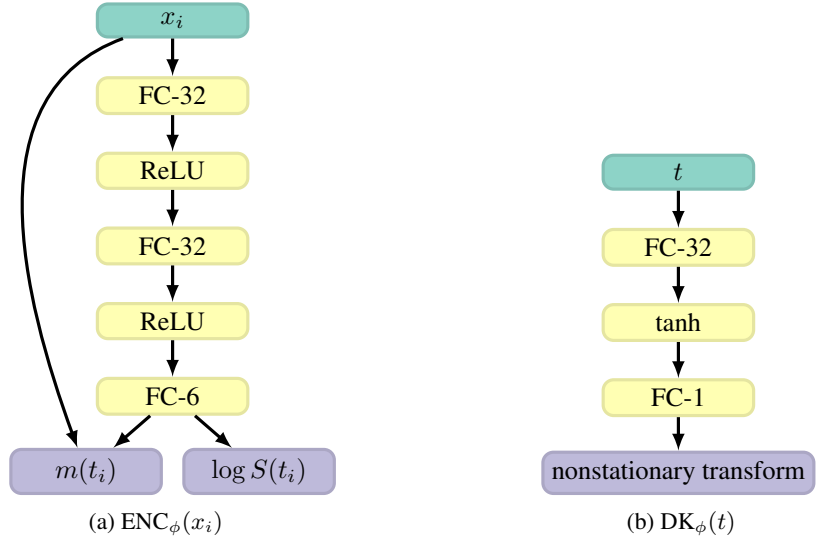
\begin{figure}[htbp]
  \centering
  \begin{subfigure}[t]{0.5\textwidth}
  \centering
    \begin{tikzpicture}[
    layer/.style={
        rectangle,
        draw=black,
        very thick,
        minimum width=2cm,
        align=center,
        rounded corners
    },
    arrow/.style={
        ->,
        >=latex,
        very thick
    }
]

\node[layer,fill={rgb,1:red,0.5529411764705883;green,0.8274509803921568;blue,0.7803921568627451},draw={rgb,1:red,0.49411764705882355;green,0.7411764705882353;blue,0.7019607843137254}] (x_i00) at (-1.15,0.0) {$x_i$};
\node[layer,fill={rgb,1:red,1.0;green,1.0;blue,0.7019607843137254},draw={rgb,1:red,0.8980392156862745;green,0.8980392156862745;blue,0.6313725490196078}] (FC3210) at (-1.15,-1.0) {FC-32};
\node[layer,fill={rgb,1:red,1.0;green,1.0;blue,0.7019607843137254},draw={rgb,1:red,0.8980392156862745;green,0.8980392156862745;blue,0.6313725490196078}] (ReLU20) at (-1.15,-2.0) {ReLU};
\node[layer,fill={rgb,1:red,1.0;green,1.0;blue,0.7019607843137254},draw={rgb,1:red,0.8980392156862745;green,0.8980392156862745;blue,0.6313725490196078}] (FC3230) at (-1.15,-3.0) {FC-32};
\node[layer,fill={rgb,1:red,1.0;green,1.0;blue,0.7019607843137254},draw={rgb,1:red,0.8980392156862745;green,0.8980392156862745;blue,0.6313725490196078}] (ReLU40) at (-1.15,-4.0) {ReLU};
\node[layer,fill={rgb,1:red,1.0;green,1.0;blue,0.7019607843137254},draw={rgb,1:red,0.8980392156862745;green,0.8980392156862745;blue,0.6313725490196078}] (FC650) at (-1.15,-5.0) {FC-6};
\node[layer,fill={rgb,1:red,0.7450980392156863;green,0.7294117647058823;blue,0.8549019607843137},draw={rgb,1:red,0.6705882352941176;green,0.6549019607843137;blue,0.7686274509803922}] (mt_i60) at (-2.3,-6.0) {$m(t_i)$};
\node[layer,fill={rgb,1:red,0.7450980392156863;green,0.7294117647058823;blue,0.8549019607843137},draw={rgb,1:red,0.6705882352941176;green,0.6549019607843137;blue,0.7686274509803922}] (logSt_i61) at (0.0,-6.0) {$\log S(t_i)$};
\draw[arrow] (x_i00) -- (FC3210);
\draw[arrow] (FC3210) -- (ReLU20);
\draw[arrow] (ReLU20) -- (FC3230);
\draw[arrow] (FC3230) -- (ReLU40);
\draw[arrow] (ReLU40) -- (FC650);
\draw[arrow] (FC650) -- (mt_i60);
\draw[arrow] (FC650) -- (logSt_i61);\draw[arrow] (x_i00) to [out=200,in=110] (mt_i60);\end{tikzpicture}
    \caption{$\text{ENC}_\phi(x_i)$}
  \label{fig:shadowing-enc-arch}
  \end{subfigure}\hfill
  \begin{subfigure}[t]{0.5\textwidth}
  \centering
    \begin{tikzpicture}[
    layer/.style={
        rectangle,
        draw=black,
        very thick,
        minimum width=2cm,
        align=center,
        rounded corners
    },
    arrow/.style={
        ->,
        >=latex,
        very thick
    }
]

\node[layer,fill={rgb,1:red,0.5529411764705883;green,0.8274509803921568;blue,0.7803921568627451},draw={rgb,1:red,0.49411764705882355;green,0.7411764705882353;blue,0.7019607843137254}] (t00) at (-1.15,0.0) {$t$};
\node[layer,fill={rgb,1:red,1.0;green,1.0;blue,0.7019607843137254},draw={rgb,1:red,0.8980392156862745;green,0.8980392156862745;blue,0.6313725490196078}] (FC3210) at (-1.15,-1.0) {FC-32};
\node[layer,fill={rgb,1:red,1.0;green,1.0;blue,0.7019607843137254},draw={rgb,1:red,0.8980392156862745;green,0.8980392156862745;blue,0.6313725490196078}] (tanh20) at (-1.15,-2.0) {tanh};
\node[layer,fill={rgb,1:red,1.0;green,1.0;blue,0.7019607843137254},draw={rgb,1:red,0.8980392156862745;green,0.8980392156862745;blue,0.6313725490196078}] (FC130) at (-1.15,-3.0) {FC-1};
\node[layer,fill={rgb,1:red,0.7450980392156863;green,0.7294117647058823;blue,0.8549019607843137},draw={rgb,1:red,0.6705882352941176;green,0.6549019607843137;blue,0.7686274509803922}] (nonstationarytransform40) at (-1.15,-4.0) {nonstationary transform};
\draw[arrow] (t00) -- (FC3210);
\draw[arrow] (FC3210) -- (tanh20);
\draw[arrow] (tanh20) -- (FC130);
\draw[arrow] (FC130) -- (nonstationarytransform40);\end{tikzpicture}
    \caption{$\text{DK}_\phi(t)$}
  \label{fig:shadowing-dk-arch}
  \end{subfigure}
  \caption{
  Architecture diagrams for the deep kernel and encoder
  used in the Lorenz system parameter tuning problem
  are provided in Figures~(a) and (b) respectively.
  Note that we have used the shorthand $m(t_i)$, $\log S(t_i)$ 
  to show how we have split the columns of $h_i$ in two. The value 
  of $[m(t_i), \log S(t_i)]$ only $\approx h_i$ unless $\sigma_n = 0$, see Appendix~\ref{app:encoder}.
  Note the arrow from $x_i$ to $m(t_i)$ indicates 
  a residual connection (which was useful in this case because we are learning
  a SDE in the original data coordinates).}
\label{fig:lorenz-enc-architecture}
\end{figure}

To arrive at an initial guess we sample from the distribution
$\theta_0 \sim \mathcal{N}(\theta_*, (0.2\theta_*)^2 )$.
For each time series length we tasked our approach with
inferring the true value of the parameters given 5 different guesses for 
the initial condition (i.e. one guess / dataset). The reported gradients for our approach 
are given by the average $\ell_2$-norm of the gradient of the ELBO with respect to the 
parameters $\sigma$, $\beta$, and $\rho$ after optimizing for 2000 iterations.
For the adjoint method, we report the gradient of the function:
\begin{equation}
  \mathcal{L}(\theta) = \frac{1}{3N}\sum_{i=1}^N (x_i - x_\theta(t_i))^2 + (y_i - y_\theta(t_i))^2
   + (z_i - z_\theta(t_i))^2,
\end{equation}
at the starting iteration. 
Note we
cannot provide average gradients over the entire optimization procedure for the 
adjoint based method
because the initial gradients are too large when the time interval was $[0,50]$ 
or $[0,100]$.
The error bars are given by one standard deviation from the mean.

A description of the encoder architecture is provided in Figure~\ref{fig:lorenz-enc-architecture}.
As was the case in the previous experiment, we set the decoder to be the 
identity function and placed a log-normal prior on the diffusion term.

In terms of hyperparameters, we choose $M=128$, $R=10$, and $S=100$.
We selected a learning rate of $0.1$ and
decayed the learning rate $lr = \gamma lr$ every iteration with $\gamma = 
       \exp(\log(0.9) / 1000)$.
We linearly increased the value of the KL-divergence from $0$ to $1$ over 
the course of 100 iterations.
In terms of kernel parameters we initialized $\sigma_f=1$, $\ell=10^{-2}$, 
and $\sigma_n = 10^{-5}$. 
For the diffusion term, we set ${\mu}_i = \sigma_i = 10^{-5}$ and $\tilde{\mu}_i=\tilde{\sigma}_i = 10^{-5}$.

\subsection{MOCAP experiment} \label{app:arch-mocap}
For this experiment we 
use the preprocessed dataset provided by~\cite{yildiz_ode2vae_2019}.

Like in previous examples, we place a log-normal prior on the diffusion term.
Like previous works making use of the benchmark, we assume a Gaussian likelihood. 
We place a log-normal prior on the variance of the likelihood.
A description of the architecture is provided in Figure~\ref{fig:mocap-architecture}.
Note that the architecture we chose is very similar to the architecture used 
by~\cite{yildiz_ode2vae_2019,li_scalable_2020}.
\begin{figure}[hp]
  \centering
  \begin{subfigure}[t]{0.45\textwidth}
  \centering
    \begin{tikzpicture}[
    layer/.style={
        rectangle,
        draw=black,
        very thick,
        minimum width=2cm,
        align=center,
        rounded corners
    },
    arrow/.style={
        ->,
        >=latex,
        very thick
    }
]

\node[layer,fill={rgb,1:red,0.5529411764705883;green,0.8274509803921568;blue,0.7803921568627451},draw={rgb,1:red,0.49411764705882355;green,0.7411764705882353;blue,0.7019607843137254}] (x_i00) at (-3.4499999999999997,0.0) {$x_i$};
\node[layer,fill={rgb,1:red,0.5529411764705883;green,0.8274509803921568;blue,0.7803921568627451},draw={rgb,1:red,0.49411764705882355;green,0.7411764705882353;blue,0.7019607843137254}] (x_i101) at (-1.15,0.0) {$x_{i+1}$};
\node[layer,fill={rgb,1:red,0.5529411764705883;green,0.8274509803921568;blue,0.7803921568627451},draw={rgb,1:red,0.49411764705882355;green,0.7411764705882353;blue,0.7019607843137254}] (x_i202) at (1.15,0.0) {$x_{i+2}$};
\node[layer,fill={rgb,1:red,1.0;green,1.0;blue,0.7019607843137254},draw={rgb,1:red,0.8980392156862745;green,0.8980392156862745;blue,0.6313725490196078}] (FC3010) at (-1.15,-1.0) {FC-30};
\node[layer,fill={rgb,1:red,1.0;green,1.0;blue,0.7019607843137254},draw={rgb,1:red,0.8980392156862745;green,0.8980392156862745;blue,0.6313725490196078}] (tanh20) at (-1.15,-2.0) {tanh};
\node[layer,fill={rgb,1:red,1.0;green,1.0;blue,0.7019607843137254},draw={rgb,1:red,0.8980392156862745;green,0.8980392156862745;blue,0.6313725490196078}] (FC3030) at (-1.15,-3.0) {FC-30};
\node[layer,fill={rgb,1:red,1.0;green,1.0;blue,0.7019607843137254},draw={rgb,1:red,0.8980392156862745;green,0.8980392156862745;blue,0.6313725490196078}] (tanh40) at (-1.15,-4.0) {tanh};
\node[layer,fill={rgb,1:red,1.0;green,1.0;blue,0.7019607843137254},draw={rgb,1:red,0.8980392156862745;green,0.8980392156862745;blue,0.6313725490196078}] (FC1550) at (-1.15,-5.0) {FC-15};
\node[layer,fill={rgb,1:red,1.0;green,1.0;blue,0.7019607843137254},draw={rgb,1:red,0.8980392156862745;green,0.8980392156862745;blue,0.6313725490196078}] (tanh60) at (-1.15,-6.0) {tanh};
\node[layer,fill={rgb,1:red,1.0;green,1.0;blue,0.7019607843137254},draw={rgb,1:red,0.8980392156862745;green,0.8980392156862745;blue,0.6313725490196078}] (FC670) at (-1.15,-7.0) {FC-6};
\node[layer,fill={rgb,1:red,0.7450980392156863;green,0.7294117647058823;blue,0.8549019607843137},draw={rgb,1:red,0.6705882352941176;green,0.6549019607843137;blue,0.7686274509803922}] (h_i80) at (-1.15,-8.0) {$h_i$};
\draw[arrow] (x_i00) -- (FC3010);
\draw[arrow] (x_i101) -- (FC3010);
\draw[arrow] (x_i202) -- (FC3010);
\draw[arrow] (FC3010) -- (tanh20);
\draw[arrow] (tanh20) -- (FC3030);
\draw[arrow] (FC3030) -- (tanh40);
\draw[arrow] (tanh40) -- (FC1550);
\draw[arrow] (FC1550) -- (tanh60);
\draw[arrow] (tanh60) -- (FC670);
\draw[arrow] (FC670) -- (h_i80);\end{tikzpicture}
    \caption{$\text{ENC}_\phi(x_i)$}
  \label{fig:mocap-enc-arch}
  \end{subfigure}\hfill
  \begin{subfigure}[t]{0.45\textwidth}
  \centering
    \begin{tikzpicture}[
    layer/.style={
        rectangle,
        draw=black,
        very thick,
        minimum width=2cm,
        align=center,
        rounded corners
    },
    arrow/.style={
        ->,
        >=latex,
        very thick
    }
]

\node[layer,fill={rgb,1:red,0.5529411764705883;green,0.8274509803921568;blue,0.7803921568627451},draw={rgb,1:red,0.49411764705882355;green,0.7411764705882353;blue,0.7019607843137254}] (zt00) at (-1.15,0.0) {$z(t)$};
\node[layer,fill={rgb,1:red,1.0;green,1.0;blue,0.7019607843137254},draw={rgb,1:red,0.8980392156862745;green,0.8980392156862745;blue,0.6313725490196078}] (FC1510) at (-1.15,-1.0) {FC-15};
\node[layer,fill={rgb,1:red,1.0;green,1.0;blue,0.7019607843137254},draw={rgb,1:red,0.8980392156862745;green,0.8980392156862745;blue,0.6313725490196078}] (tanh20) at (-1.15,-2.0) {tanh};
\node[layer,fill={rgb,1:red,1.0;green,1.0;blue,0.7019607843137254},draw={rgb,1:red,0.8980392156862745;green,0.8980392156862745;blue,0.6313725490196078}] (FC3030) at (-1.15,-3.0) {FC-30};
\node[layer,fill={rgb,1:red,1.0;green,1.0;blue,0.7019607843137254},draw={rgb,1:red,0.8980392156862745;green,0.8980392156862745;blue,0.6313725490196078}] (tanh40) at (-1.15,-4.0) {tanh};
\node[layer,fill={rgb,1:red,1.0;green,1.0;blue,0.7019607843137254},draw={rgb,1:red,0.8980392156862745;green,0.8980392156862745;blue,0.6313725490196078}] (FC3050) at (-1.15,-5.0) {FC-30};
\node[layer,fill={rgb,1:red,1.0;green,1.0;blue,0.7019607843137254},draw={rgb,1:red,0.8980392156862745;green,0.8980392156862745;blue,0.6313725490196078}] (tanh60) at (-1.15,-6.0) {tanh};
\node[layer,fill={rgb,1:red,1.0;green,1.0;blue,0.7019607843137254},draw={rgb,1:red,0.8980392156862745;green,0.8980392156862745;blue,0.6313725490196078}] (FC5070) at (-1.15,-7.0) {FC-50};
\node[layer,fill={rgb,1:red,0.7450980392156863;green,0.7294117647058823;blue,0.8549019607843137},draw={rgb,1:red,0.6705882352941176;green,0.6549019607843137;blue,0.7686274509803922}] (mut80) at (-1.15,-8.0) {$\mu(t)$};
\draw[arrow] (zt00) -- (FC1510);
\draw[arrow] (FC1510) -- (tanh20);
\draw[arrow] (tanh20) -- (FC3030);
\draw[arrow] (FC3030) -- (tanh40);
\draw[arrow] (tanh40) -- (FC3050);
\draw[arrow] (FC3050) -- (tanh60);
\draw[arrow] (tanh60) -- (FC5070);
\draw[arrow] (FC5070) -- (mut80);\end{tikzpicture}
    \caption{$\text{DEC}_\phi(t)$}
  \label{fig:mocap-dec-arch}
  \end{subfigure}\\
  \begin{subfigure}[b]{0.45\textwidth}
  \centering
    \begin{tikzpicture}[
    layer/.style={
        rectangle,
        draw=black,
        very thick,
        minimum width=2cm,
        align=center,
        rounded corners
    },
    arrow/.style={
        ->,
        >=latex,
        very thick
    }
]

\node[layer,fill={rgb,1:red,0.5529411764705883;green,0.8274509803921568;blue,0.7803921568627451},draw={rgb,1:red,0.49411764705882355;green,0.7411764705882353;blue,0.7019607843137254}] (t00) at (-1.15,0.0) {$t$};
\node[layer,fill={rgb,1:red,1.0;green,1.0;blue,0.7019607843137254},draw={rgb,1:red,0.8980392156862745;green,0.8980392156862745;blue,0.6313725490196078}] (FC3210) at (-1.15,-1.0) {FC-32};
\node[layer,fill={rgb,1:red,1.0;green,1.0;blue,0.7019607843137254},draw={rgb,1:red,0.8980392156862745;green,0.8980392156862745;blue,0.6313725490196078}] (tanh20) at (-1.15,-2.0) {tanh};
\node[layer,fill={rgb,1:red,1.0;green,1.0;blue,0.7019607843137254},draw={rgb,1:red,0.8980392156862745;green,0.8980392156862745;blue,0.6313725490196078}] (FC130) at (-1.15,-3.0) {FC-1};
\node[layer,fill={rgb,1:red,0.7450980392156863;green,0.7294117647058823;blue,0.8549019607843137},draw={rgb,1:red,0.6705882352941176;green,0.6549019607843137;blue,0.7686274509803922}] (nonstationarytransform40) at (-1.15,-4.0) {nonstationary transform};
\draw[arrow] (t00) -- (FC3210);
\draw[arrow] (FC3210) -- (tanh20);
\draw[arrow] (tanh20) -- (FC130);
\draw[arrow] (FC130) -- (nonstationarytransform40);\end{tikzpicture}
    \caption{$\text{DK}_\phi(t)$}
  \label{fig:mocap-dk-arch}
  \end{subfigure}\hfill
  \begin{subfigure}[b]{0.45\textwidth}
  \centering
    \begin{tikzpicture}[
    layer/.style={
        rectangle,
        draw=black,
        very thick,
        minimum width=2cm,
        align=center,
        rounded corners
    },
    arrow/.style={
        ->,
        >=latex,
        very thick
    }
]

\node[layer,fill={rgb,1:red,0.5529411764705883;green,0.8274509803921568;blue,0.7803921568627451},draw={rgb,1:red,0.49411764705882355;green,0.7411764705882353;blue,0.7019607843137254}] (x00) at (-1.15,0.0) {$x$};
\node[layer,fill={rgb,1:red,1.0;green,1.0;blue,0.7019607843137254},draw={rgb,1:red,0.8980392156862745;green,0.8980392156862745;blue,0.6313725490196078}] (FC3010) at (-1.15,-1.0) {FC-30};
\node[layer,fill={rgb,1:red,1.0;green,1.0;blue,0.7019607843137254},draw={rgb,1:red,0.8980392156862745;green,0.8980392156862745;blue,0.6313725490196078}] (tanh20) at (-1.15,-2.0) {tanh};
\node[layer,fill={rgb,1:red,1.0;green,1.0;blue,0.7019607843137254},draw={rgb,1:red,0.8980392156862745;green,0.8980392156862745;blue,0.6313725490196078}] (FC630) at (-1.15,-3.0) {FC-6};
\node[layer,fill={rgb,1:red,0.7450980392156863;green,0.7294117647058823;blue,0.8549019607843137},draw={rgb,1:red,0.6705882352941176;green,0.6549019607843137;blue,0.7686274509803922}] (drift40) at (-1.15,-4.0) {drift};
\draw[arrow] (x00) -- (FC3010);
\draw[arrow] (FC3010) -- (tanh20);
\draw[arrow] (tanh20) -- (FC630);
\draw[arrow] (FC630) -- (drift40);\end{tikzpicture}
    \caption{$f_\theta(z, t)$}
  \label{fig:mocap-drift-arch}
  \end{subfigure}
  \caption{
    Architecture diagrams for the encoder, 
    decoder, deep kernel and drift function used in the 
    MOCAP benchmark.
    We used very similar architectures to~\cite{yildiz_ode2vae_2019, li_scalable_2020}.
  Here $\mu(t)$ indicates the mean of the likelihood, $p_\theta(x\mid z(t))$.}
\label{fig:mocap-architecture}
\end{figure}
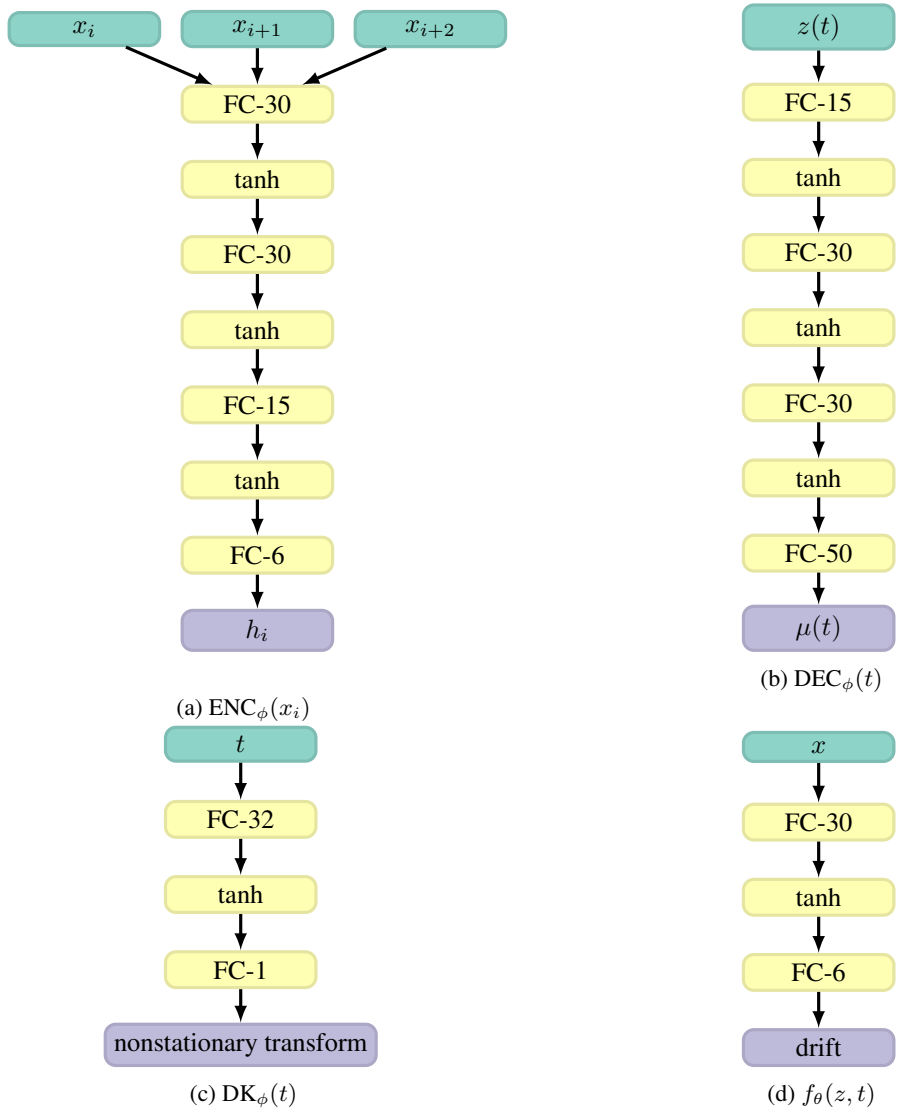

In terms of hyperparameters, we chose a batch size of 512.
We used a linear schedule to update the weighting on the 
KL-divergence from $0$ and $1$ over the course of $200$ iterations.
We make use of the nested Monte-Carlo scheme 
in equation~\eqref{eq:nested-monte-carlo} with $R=10$ and $S=10$.
We chose a learning rate of $0.01$ and decayed the learning rate 
each iteration according to the schedule $lr=\gamma lr$ with $\gamma = 
       \exp(\log(0.9) / 1000)$.
In terms of kernel parameters, we initialize $\sigma_n=10^{-5}$, 
$\ell=10^{-2}$, and $\sigma_f = 1$.
In terms of the prior on the diffusion term
we initialize ${\mu}_i = \sigma_i = 10^{-5}$ and set $\tilde{\mu}_i=\tilde{\sigma}_i = 1$.
In terms of the prior on the variance of the observations,
we initialize ${\mu}_i = \sigma_i = 10^{-2}$ and set $\tilde{\mu}_i=\tilde{\sigma}_i = 1$.
We considered both Softplus~\cite{li_scalable_2020} and tanh~\cite{yildiz_ode2vae_2019}
nonlinearities and found that tanh nonlinearities provided improved validation performance.
We train for $100$ epochs testing validation accuracy every 10 epochs. We report
the average test accuracy after training 10 models from different random seeds.

Previous studies tended to report mean-squared-error as,
$\text{MSE}\pm\text{ERROR}$.
We report $\text{RMSE}\pm\text{NEW ERROR}$ so that error units are consistent with the units of 
the original dataset.
To convert MSE to RMSE we used a first-order 
Taylor-series approximation,
\begin{equation}
  \begin{split}
    \text{RMSE} &= \sqrt{\text{MSE}}\\
    \text{NEW ERROR} &= \frac{1}{2} \text{ERROR}/ \text{RMSE}
  \end{split}
\end{equation}

\subsection{Neural SDE from video} \label{app:arch-nsde-video}
In this section, we provide a more detailed description 
on the numerical study described in Section~\ref{sec:nsde-from-video}.
We generated data by simulating a nonlinear pendulum with 
the equations,
\begin{equation}
  \begin{split}
    \dot{x} &= p \\
    \dot{p} &= - \sin(x),
  \end{split}
\end{equation}
for $30$ seconds while sampling the state at a frequency of 15Hz.
The architecture we used for this experiment is provided 
in Figure~\ref{fig:nsde-architecture}.
  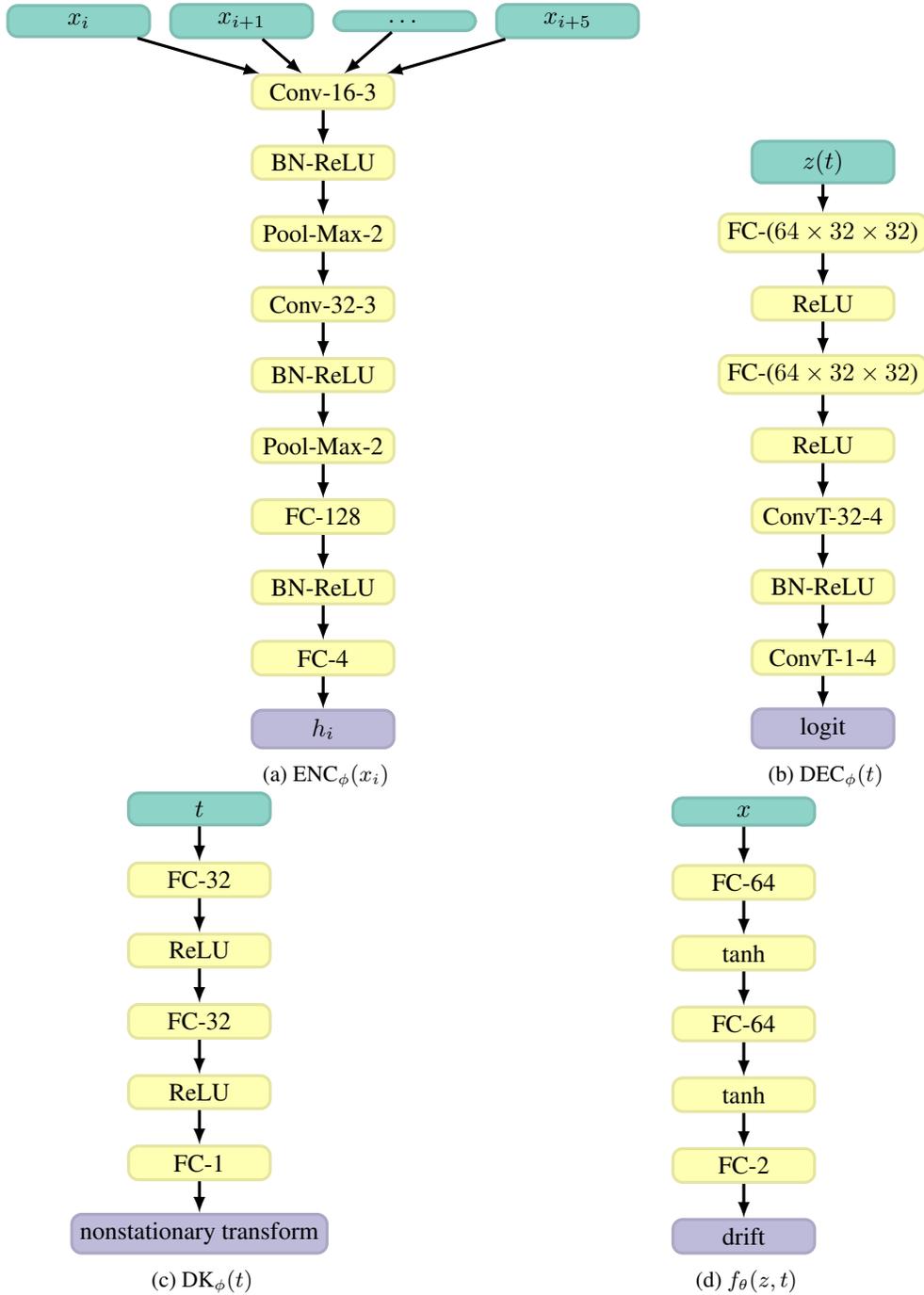
\begin{figure}[hp]
      \centering
  \begin{subfigure}[t]{0.7\textwidth}
  \centering
    \begin{tikzpicture}[
    layer/.style={
        rectangle,
        draw=black,
        very thick,
        minimum width=2cm,
        align=center,
        rounded corners
    },
    arrow/.style={
        ->,
        >=latex,
        very thick
    }
]

\node[layer,fill={rgb,1:red,0.5529411764705883;green,0.8274509803921568;blue,0.7803921568627451},draw={rgb,1:red,0.49411764705882355;green,0.7411764705882353;blue,0.7019607843137254}] (x_i00) at (-4.6,0.0) {$x_i$};
\node[layer,fill={rgb,1:red,0.5529411764705883;green,0.8274509803921568;blue,0.7803921568627451},draw={rgb,1:red,0.49411764705882355;green,0.7411764705882353;blue,0.7019607843137254}] (x_i101) at (-2.3,0.0) {$x_{i+1}$};
\node[layer,fill={rgb,1:red,0.5529411764705883;green,0.8274509803921568;blue,0.7803921568627451},draw={rgb,1:red,0.49411764705882355;green,0.7411764705882353;blue,0.7019607843137254}] (dots02) at (0.0,0.0) {$\dots$};
\node[layer,fill={rgb,1:red,0.5529411764705883;green,0.8274509803921568;blue,0.7803921568627451},draw={rgb,1:red,0.49411764705882355;green,0.7411764705882353;blue,0.7019607843137254}] (x_i503) at (2.3,0.0) {$x_{i+5}$};
\node[layer,fill={rgb,1:red,1.0;green,1.0;blue,0.7019607843137254},draw={rgb,1:red,0.8980392156862745;green,0.8980392156862745;blue,0.6313725490196078}] (Conv16310) at (-1.15,-1.0) {Conv-16-3};
\node[layer,fill={rgb,1:red,1.0;green,1.0;blue,0.7019607843137254},draw={rgb,1:red,0.8980392156862745;green,0.8980392156862745;blue,0.6313725490196078}] (BNReLU20) at (-1.15,-2.0) {BN-ReLU};
\node[layer,fill={rgb,1:red,1.0;green,1.0;blue,0.7019607843137254},draw={rgb,1:red,0.8980392156862745;green,0.8980392156862745;blue,0.6313725490196078}] (PoolMax230) at (-1.15,-3.0) {Pool-Max-2};
\node[layer,fill={rgb,1:red,1.0;green,1.0;blue,0.7019607843137254},draw={rgb,1:red,0.8980392156862745;green,0.8980392156862745;blue,0.6313725490196078}] (Conv32340) at (-1.15,-4.0) {Conv-32-3};
\node[layer,fill={rgb,1:red,1.0;green,1.0;blue,0.7019607843137254},draw={rgb,1:red,0.8980392156862745;green,0.8980392156862745;blue,0.6313725490196078}] (BNReLU50) at (-1.15,-5.0) {BN-ReLU};
\node[layer,fill={rgb,1:red,1.0;green,1.0;blue,0.7019607843137254},draw={rgb,1:red,0.8980392156862745;green,0.8980392156862745;blue,0.6313725490196078}] (PoolMax260) at (-1.15,-6.0) {Pool-Max-2};
\node[layer,fill={rgb,1:red,1.0;green,1.0;blue,0.7019607843137254},draw={rgb,1:red,0.8980392156862745;green,0.8980392156862745;blue,0.6313725490196078}] (FC12870) at (-1.15,-7.0) {FC-128};
\node[layer,fill={rgb,1:red,1.0;green,1.0;blue,0.7019607843137254},draw={rgb,1:red,0.8980392156862745;green,0.8980392156862745;blue,0.6313725490196078}] (BNReLU80) at (-1.15,-8.0) {BN-ReLU};
\node[layer,fill={rgb,1:red,1.0;green,1.0;blue,0.7019607843137254},draw={rgb,1:red,0.8980392156862745;green,0.8980392156862745;blue,0.6313725490196078}] (FC490) at (-1.15,-9.0) {FC-4};
\node[layer,fill={rgb,1:red,0.7450980392156863;green,0.7294117647058823;blue,0.8549019607843137},draw={rgb,1:red,0.6705882352941176;green,0.6549019607843137;blue,0.7686274509803922}] (h_i100) at (-1.15,-10.0) {$h_i$};
\draw[arrow] (x_i00) -- (Conv16310);
\draw[arrow] (x_i101) -- (Conv16310);
\draw[arrow] (dots02) -- (Conv16310);
\draw[arrow] (x_i503) -- (Conv16310);
\draw[arrow] (Conv16310) -- (BNReLU20);
\draw[arrow] (BNReLU20) -- (PoolMax230);
\draw[arrow] (PoolMax230) -- (Conv32340);
\draw[arrow] (Conv32340) -- (BNReLU50);
\draw[arrow] (BNReLU50) -- (PoolMax260);
\draw[arrow] (PoolMax260) -- (FC12870);
\draw[arrow] (FC12870) -- (BNReLU80);
\draw[arrow] (BNReLU80) -- (FC490);
\draw[arrow] (FC490) -- (h_i100);\end{tikzpicture}
    \caption{$\text{ENC}_\phi(x_i)$}
  \label{fig:nsde-enc-arch}
  \end{subfigure}\hfill
  \begin{subfigure}[t]{0.29\textwidth}
  \centering
    \begin{tikzpicture}[
    layer/.style={
        rectangle,
        draw=black,
        very thick,
        minimum width=2cm,
        align=center,
        rounded corners
    },
    arrow/.style={
        ->,
        >=latex,
        very thick
    }
]

\node[layer,fill={rgb,1:red,0.5529411764705883;green,0.8274509803921568;blue,0.7803921568627451},draw={rgb,1:red,0.49411764705882355;green,0.7411764705882353;blue,0.7019607843137254}] (zt00) at (-1.15,0.0) {$z(t)$};
\node[layer,fill={rgb,1:red,1.0;green,1.0;blue,0.7019607843137254},draw={rgb,1:red,0.8980392156862745;green,0.8980392156862745;blue,0.6313725490196078}] (FC64times32times3210) at (-1.15,-1.0) {FC-($64\times 32 \times 32$)};
\node[layer,fill={rgb,1:red,1.0;green,1.0;blue,0.7019607843137254},draw={rgb,1:red,0.8980392156862745;green,0.8980392156862745;blue,0.6313725490196078}] (ReLU20) at (-1.15,-2.0) {ReLU};
\node[layer,fill={rgb,1:red,1.0;green,1.0;blue,0.7019607843137254},draw={rgb,1:red,0.8980392156862745;green,0.8980392156862745;blue,0.6313725490196078}] (FC64times32times3230) at (-1.15,-3.0) {FC-($64\times 32 \times 32$)};
\node[layer,fill={rgb,1:red,1.0;green,1.0;blue,0.7019607843137254},draw={rgb,1:red,0.8980392156862745;green,0.8980392156862745;blue,0.6313725490196078}] (ReLU40) at (-1.15,-4.0) {ReLU};
\node[layer,fill={rgb,1:red,1.0;green,1.0;blue,0.7019607843137254},draw={rgb,1:red,0.8980392156862745;green,0.8980392156862745;blue,0.6313725490196078}] (ConvT32450) at (-1.15,-5.0) {ConvT-32-4};
\node[layer,fill={rgb,1:red,1.0;green,1.0;blue,0.7019607843137254},draw={rgb,1:red,0.8980392156862745;green,0.8980392156862745;blue,0.6313725490196078}] (BNReLU60) at (-1.15,-6.0) {BN-ReLU};
\node[layer,fill={rgb,1:red,1.0;green,1.0;blue,0.7019607843137254},draw={rgb,1:red,0.8980392156862745;green,0.8980392156862745;blue,0.6313725490196078}] (ConvT1470) at (-1.15,-7.0) {ConvT-1-4};
\node[layer,fill={rgb,1:red,0.7450980392156863;green,0.7294117647058823;blue,0.8549019607843137},draw={rgb,1:red,0.6705882352941176;green,0.6549019607843137;blue,0.7686274509803922}] (logit80) at (-1.15,-8.0) {logit};
\draw[arrow] (zt00) -- (FC64times32times3210);
\draw[arrow] (FC64times32times3210) -- (ReLU20);
\draw[arrow] (ReLU20) -- (FC64times32times3230);
\draw[arrow] (FC64times32times3230) -- (ReLU40);
\draw[arrow] (ReLU40) -- (ConvT32450);
\draw[arrow] (ConvT32450) -- (BNReLU60);
\draw[arrow] (BNReLU60) -- (ConvT1470);
\draw[arrow] (ConvT1470) -- (logit80);\end{tikzpicture}
    \caption{$\text{DEC}_\phi(t)$}
  \label{fig:nsde-dec-arch}
  \end{subfigure}\\
  \begin{subfigure}[b]{0.45\textwidth}
  \centering
    \begin{tikzpicture}[
    layer/.style={
        rectangle,
        draw=black,
        very thick,
        minimum width=2cm,
        align=center,
        rounded corners
    },
    arrow/.style={
        ->,
        >=latex,
        very thick
    }
]

\node[layer,fill={rgb,1:red,0.5529411764705883;green,0.8274509803921568;blue,0.7803921568627451},draw={rgb,1:red,0.49411764705882355;green,0.7411764705882353;blue,0.7019607843137254}] (t00) at (-1.15,0.0) {$t$};
\node[layer,fill={rgb,1:red,1.0;green,1.0;blue,0.7019607843137254},draw={rgb,1:red,0.8980392156862745;green,0.8980392156862745;blue,0.6313725490196078}] (FC3210) at (-1.15,-1.0) {FC-32};
\node[layer,fill={rgb,1:red,1.0;green,1.0;blue,0.7019607843137254},draw={rgb,1:red,0.8980392156862745;green,0.8980392156862745;blue,0.6313725490196078}] (ReLU20) at (-1.15,-2.0) {ReLU};
\node[layer,fill={rgb,1:red,1.0;green,1.0;blue,0.7019607843137254},draw={rgb,1:red,0.8980392156862745;green,0.8980392156862745;blue,0.6313725490196078}] (FC3230) at (-1.15,-3.0) {FC-32};
\node[layer,fill={rgb,1:red,1.0;green,1.0;blue,0.7019607843137254},draw={rgb,1:red,0.8980392156862745;green,0.8980392156862745;blue,0.6313725490196078}] (ReLU40) at (-1.15,-4.0) {ReLU};
\node[layer,fill={rgb,1:red,1.0;green,1.0;blue,0.7019607843137254},draw={rgb,1:red,0.8980392156862745;green,0.8980392156862745;blue,0.6313725490196078}] (FC150) at (-1.15,-5.0) {FC-1};
\node[layer,fill={rgb,1:red,0.7450980392156863;green,0.7294117647058823;blue,0.8549019607843137},draw={rgb,1:red,0.6705882352941176;green,0.6549019607843137;blue,0.7686274509803922}] (nonstationarytransform60) at (-1.15,-6.0) {nonstationary transform};
\draw[arrow] (t00) -- (FC3210);
\draw[arrow] (FC3210) -- (ReLU20);
\draw[arrow] (ReLU20) -- (FC3230);
\draw[arrow] (FC3230) -- (ReLU40);
\draw[arrow] (ReLU40) -- (FC150);
\draw[arrow] (FC150) -- (nonstationarytransform60);\end{tikzpicture}
    \caption{$\text{DK}_\phi(t)$}
  \label{fig:nsde-dk-arch}
  \end{subfigure}\hfill
  \begin{subfigure}[b]{0.45\textwidth}
  \centering
    \begin{tikzpicture}[
    layer/.style={
        rectangle,
        draw=black,
        very thick,
        minimum width=2cm,
        align=center,
        rounded corners
    },
    arrow/.style={
        ->,
        >=latex,
        very thick
    }
]

\node[layer,fill={rgb,1:red,0.5529411764705883;green,0.8274509803921568;blue,0.7803921568627451},draw={rgb,1:red,0.49411764705882355;green,0.7411764705882353;blue,0.7019607843137254}] (x00) at (-1.15,0.0) {$x$};
\node[layer,fill={rgb,1:red,1.0;green,1.0;blue,0.7019607843137254},draw={rgb,1:red,0.8980392156862745;green,0.8980392156862745;blue,0.6313725490196078}] (FC6410) at (-1.15,-1.0) {FC-64};
\node[layer,fill={rgb,1:red,1.0;green,1.0;blue,0.7019607843137254},draw={rgb,1:red,0.8980392156862745;green,0.8980392156862745;blue,0.6313725490196078}] (tanh20) at (-1.15,-2.0) {tanh};
\node[layer,fill={rgb,1:red,1.0;green,1.0;blue,0.7019607843137254},draw={rgb,1:red,0.8980392156862745;green,0.8980392156862745;blue,0.6313725490196078}] (FC6430) at (-1.15,-3.0) {FC-64};
\node[layer,fill={rgb,1:red,1.0;green,1.0;blue,0.7019607843137254},draw={rgb,1:red,0.8980392156862745;green,0.8980392156862745;blue,0.6313725490196078}] (tanh40) at (-1.15,-4.0) {tanh};
\node[layer,fill={rgb,1:red,1.0;green,1.0;blue,0.7019607843137254},draw={rgb,1:red,0.8980392156862745;green,0.8980392156862745;blue,0.6313725490196078}] (FC250) at (-1.15,-5.0) {FC-2};
\node[layer,fill={rgb,1:red,0.7450980392156863;green,0.7294117647058823;blue,0.8549019607843137},draw={rgb,1:red,0.6705882352941176;green,0.6549019607843137;blue,0.7686274509803922}] (drift60) at (-1.15,-6.0) {drift};
\draw[arrow] (x00) -- (FC6410);
\draw[arrow] (FC6410) -- (tanh20);
\draw[arrow] (tanh20) -- (FC6430);
\draw[arrow] (FC6430) -- (tanh40);
\draw[arrow] (tanh40) -- (FC250);
\draw[arrow] (FC250) -- (drift60);\end{tikzpicture}
    \caption{$f_\theta(z, t)$}
  \label{fig:nsde-drift-arch}
  \end{subfigure}
  \caption{
    Architecture diagrams for the encoder, 
    decoder, deep kernel and drift function used in the 
    Neural SDE from video example.}
\label{fig:nsde-architecture}
\end{figure}

In terms of hyperparameters, we chose a batch size of 128.
We gradually increased the weighting of the KL-divergence 
term using a linear schedule over the course of $1000$ iterations.
We used the nested Monte-Carlo method suggested in Equation~\ref{eq:nested-monte-carlo}
and set $R=20$ and $S=10$.
We chose a learning rate of $0.001$ and decayed the learning rate 
each iteration according to the schedule $lr=\gamma lr$ with $\gamma = 
\exp(\log(0.9) / 1000)$.
In terms of kernel parameters, we initialized $\sigma_f=1$, 
$\sigma_n = 10^{-5}$, and $\ell=10^{-2}$.
We placed a log-normal prior on the diffusion term and approximated 
the posterior using log-normal variational distribution. 
With regards to the prior on the diffusion term
we initialize ${\mu}_i = \sigma_i = 0.1$ and set $\tilde{\mu}_i=\tilde{\sigma}_i = 10^{-5}$.

\section{Numerical study on the effect of Monte-Carlo parameters} \label{app:study-on-mc}
In this study we investigate the effect of varying the number of nested Monte 
Carlo samples on the rate of validation RMSE convergence.
To do so, we consider the problem of building a predictive model for a four-dimensional 
predator-prey system. The system consists of two predators and two prey where 
there is a competitive dynamic between the predators.
The equations governing the dynamics of the system 
are, 
\begin{align}
    \dot{x}_1 &= x_1 \left(\alpha_1 - \beta_1y_1 - \gamma_1 y_2\right) \left(1 - \frac{x_1}{k_1}\right),\\
    \dot{x}_2 &= x_2 \left(\alpha_2 - \beta_2y_1 - \gamma_2 y_2\right) \left(1 - \frac{x_2}{k_2}\right),\\
    \dot{y}_1 &= y_1 \left(-\delta_1 + \epsilon_1 x_1 + \xi_1 x_2 - \nu_1 y_2 \right), \\
    \dot{y}_2 &= y_2 \left(-\delta_2 + \epsilon_2 x_1 + \xi_2 x_2 +\nu_2 y_1 \right),
\end{align}
where $\alpha_i$ is the grow rate of the prey $x_i$, $\beta_i$ is the rate that predator 
$y_1$ is consuming prey $x_i$, $\gamma_i$ is the rate that predator $y_2$ is consuming
prey $x_i$, $k_i$ is the carrying capacity for prey $x_i$, $\delta_i$ is the 
death rate of predator $y_i$, $\epsilon_i$ is the conversion rate for predator 
$y_i$ from $x_1$, $\xi_i$ is the conversion rate for predator $y_i$ from prey 
$x_2$, and $\nu_i$ represents the competitive effects on $y_i$ caused by the other 
predator.

We simulated the system for 300 units of time collecting data at a frequency 
of 10Hz. We assume a Gaussian noise with a standard deviation of $10^{-2}$.
For all experiments we used the following hyperparameters:
a batch size of $256$, $1000$ warmup iterations, and a learning rate of
$10^{-3}$.
    In terms of kernel parameters, we initialized $\sigma_f = 1$, 
    $\ell=10^{-2}$, and $\sigma_n = 10^{-5}$.
    For the diffusion term we set $\mu_i = \sigma_i = 10^{-5}$ and 
    $\tilde{\mu}_i = \tilde{\sigma}_i = 1$.
    The architecture description for the neural networks used in this example
    are provided in Figure~\ref{fig:gradvar-arch-diagram}.

Results are summarized in Figure~\ref{fig:variance-of-mc} below. 
On this problem we find we are able to achieve a reasonable
    validation accuracy for $S=10$, $S=50$, and $S=100$; however, it is
    challenging to know beforehand what gradient variance will be acceptable for
    a particular data set. We see that increasing $S$ increases the total number
    of function evaluations. Note that the total number of parallel evaluations of
    the forward model in all cases remains constant.

\begin{figure}[htb]
  \centering
\begin{subfigure}{.5\textwidth}
  \centering
  \includegraphics[width=1.\linewidth]{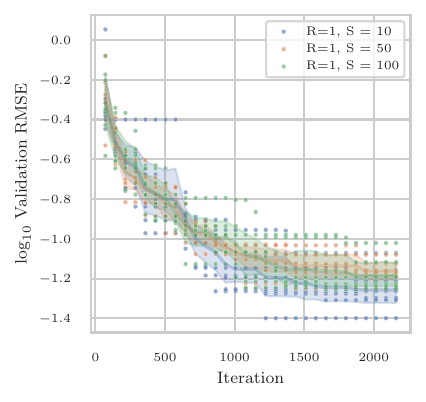}
\end{subfigure}\hfill
\begin{subfigure}{.5\textwidth}
  \centering
  \includegraphics[width=1.\linewidth]{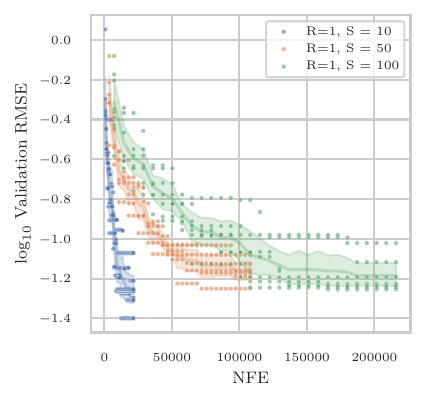}
\end{subfigure}
    \caption{Validation RMSE versus iteration (L) and number of function
    evaluations (R). }
    \label{fig:variance-of-mc}
\end{figure}

  \begin{figure}[hp]
  \begin{subfigure}[t]{0.7\textwidth}
  \centering
    \begin{tikzpicture}[
    layer/.style={
        rectangle,
        draw=black,
        very thick,
        minimum width=2cm,
        align=center,
        rounded corners
    },
    arrow/.style={
        ->,
        >=latex,
        very thick
    }
]

\node[layer,fill={rgb,1:red,0.5529411764705883;green,0.8274509803921568;blue,0.7803921568627451},draw={rgb,1:red,0.49411764705882355;green,0.7411764705882353;blue,0.7019607843137254}] (x_i00) at (-1.15,0.0) {$x_i$};
\node[layer,fill={rgb,1:red,1.0;green,1.0;blue,0.7019607843137254},draw={rgb,1:red,0.8980392156862745;green,0.8980392156862745;blue,0.6313725490196078}] (FC3210) at (-1.15,-1.0) {FC-32};
\node[layer,fill={rgb,1:red,1.0;green,1.0;blue,0.7019607843137254},draw={rgb,1:red,0.8980392156862745;green,0.8980392156862745;blue,0.6313725490196078}] (ReLU20) at (-1.15,-2.0) {ReLU};
\node[layer,fill={rgb,1:red,1.0;green,1.0;blue,0.7019607843137254},draw={rgb,1:red,0.8980392156862745;green,0.8980392156862745;blue,0.6313725490196078}] (FC3230) at (-1.15,-3.0) {FC-32};
\node[layer,fill={rgb,1:red,1.0;green,1.0;blue,0.7019607843137254},draw={rgb,1:red,0.8980392156862745;green,0.8980392156862745;blue,0.6313725490196078}] (ReLU40) at (-1.15,-4.0) {ReLU};
\node[layer,fill={rgb,1:red,1.0;green,1.0;blue,0.7019607843137254},draw={rgb,1:red,0.8980392156862745;green,0.8980392156862745;blue,0.6313725490196078}] (FC450) at (-1.15,-5.0) {FC-4};
\node[layer,fill={rgb,1:red,0.7450980392156863;green,0.7294117647058823;blue,0.8549019607843137},draw={rgb,1:red,0.6705882352941176;green,0.6549019607843137;blue,0.7686274509803922}] (mt_i60) at (-2.3,-6.0) {$m(t_i)$};
\node[layer,fill={rgb,1:red,0.7450980392156863;green,0.7294117647058823;blue,0.8549019607843137},draw={rgb,1:red,0.6705882352941176;green,0.6549019607843137;blue,0.7686274509803922}] (logSt_i61) at (0.0,-6.0) {$\log S(t_i)$};
\draw[arrow] (x_i00) -- (FC3210);
\draw[arrow] (FC3210) -- (ReLU20);
\draw[arrow] (ReLU20) -- (FC3230);
\draw[arrow] (FC3230) -- (ReLU40);
\draw[arrow] (ReLU40) -- (FC450);
\draw[arrow] (FC450) -- (mt_i60);
\draw[arrow] (FC450) -- (logSt_i61);\draw[arrow] (x_i00) to [out=200,in=110] (mt_i60);\end{tikzpicture}
    \caption{$\text{ENC}_\phi(x_i)$}
  \label{fig:gradvar-enc-arch}
  \end{subfigure}\hfill
  \begin{subfigure}[b]{0.45\textwidth}
  \centering
    \begin{tikzpicture}[
    layer/.style={
        rectangle,
        draw=black,
        very thick,
        minimum width=2cm,
        align=center,
        rounded corners
    },
    arrow/.style={
        ->,
        >=latex,
        very thick
    }
]

\node[layer,fill={rgb,1:red,0.5529411764705883;green,0.8274509803921568;blue,0.7803921568627451},draw={rgb,1:red,0.49411764705882355;green,0.7411764705882353;blue,0.7019607843137254}] (t00) at (-1.15,0.0) {$t$};
\node[layer,fill={rgb,1:red,1.0;green,1.0;blue,0.7019607843137254},draw={rgb,1:red,0.8980392156862745;green,0.8980392156862745;blue,0.6313725490196078}] (FC3210) at (-1.15,-1.0) {FC-32};
\node[layer,fill={rgb,1:red,1.0;green,1.0;blue,0.7019607843137254},draw={rgb,1:red,0.8980392156862745;green,0.8980392156862745;blue,0.6313725490196078}] (ReLU20) at (-1.15,-2.0) {ReLU};
\node[layer,fill={rgb,1:red,1.0;green,1.0;blue,0.7019607843137254},draw={rgb,1:red,0.8980392156862745;green,0.8980392156862745;blue,0.6313725490196078}] (FC3230) at (-1.15,-3.0) {FC-32};
\node[layer,fill={rgb,1:red,1.0;green,1.0;blue,0.7019607843137254},draw={rgb,1:red,0.8980392156862745;green,0.8980392156862745;blue,0.6313725490196078}] (ReLU40) at (-1.15,-4.0) {ReLU};
\node[layer,fill={rgb,1:red,1.0;green,1.0;blue,0.7019607843137254},draw={rgb,1:red,0.8980392156862745;green,0.8980392156862745;blue,0.6313725490196078}] (FC150) at (-1.15,-5.0) {FC-1};
\node[layer,fill={rgb,1:red,0.7450980392156863;green,0.7294117647058823;blue,0.8549019607843137},draw={rgb,1:red,0.6705882352941176;green,0.6549019607843137;blue,0.7686274509803922}] (nonstationarytransform60) at (-1.15,-6.0) {nonstationary transform};
\draw[arrow] (t00) -- (FC3210);
\draw[arrow] (FC3210) -- (ReLU20);
\draw[arrow] (ReLU20) -- (FC3230);
\draw[arrow] (FC3230) -- (ReLU40);
\draw[arrow] (ReLU40) -- (FC150);
\draw[arrow] (FC150) -- (nonstationarytransform60);\end{tikzpicture}
    \caption{$\text{DK}_\phi(t)$}
  \label{fig:gradvar-dk-arch}
  \end{subfigure}\hfill
  \begin{subfigure}[b]{0.45\textwidth}
  \centering
    \begin{tikzpicture}[
    layer/.style={
        rectangle,
        draw=black,
        very thick,
        minimum width=2cm,
        align=center,
        rounded corners
    },
    arrow/.style={
        ->,
        >=latex,
        very thick
    }
]

\node[layer,fill={rgb,1:red,0.5529411764705883;green,0.8274509803921568;blue,0.7803921568627451},draw={rgb,1:red,0.49411764705882355;green,0.7411764705882353;blue,0.7019607843137254}] (x00) at (-1.15,0.0) {$x$};
\node[layer,fill={rgb,1:red,1.0;green,1.0;blue,0.7019607843137254},draw={rgb,1:red,0.8980392156862745;green,0.8980392156862745;blue,0.6313725490196078}] (FC6410) at (-1.15,-1.0) {FC-64};
\node[layer,fill={rgb,1:red,1.0;green,1.0;blue,0.7019607843137254},draw={rgb,1:red,0.8980392156862745;green,0.8980392156862745;blue,0.6313725490196078}] (ReLU20) at (-1.15,-2.0) {ReLU};
\node[layer,fill={rgb,1:red,1.0;green,1.0;blue,0.7019607843137254},draw={rgb,1:red,0.8980392156862745;green,0.8980392156862745;blue,0.6313725490196078}] (FC6430) at (-1.15,-3.0) {FC-64};
\node[layer,fill={rgb,1:red,1.0;green,1.0;blue,0.7019607843137254},draw={rgb,1:red,0.8980392156862745;green,0.8980392156862745;blue,0.6313725490196078}] (ReLU40) at (-1.15,-4.0) {ReLU};
\node[layer,fill={rgb,1:red,1.0;green,1.0;blue,0.7019607843137254},draw={rgb,1:red,0.8980392156862745;green,0.8980392156862745;blue,0.6313725490196078}] (FC250) at (-1.15,-5.0) {FC-2};
\node[layer,fill={rgb,1:red,0.7450980392156863;green,0.7294117647058823;blue,0.8549019607843137},draw={rgb,1:red,0.6705882352941176;green,0.6549019607843137;blue,0.7686274509803922}] (driftfunction60) at (-1.15,-6.0) {drift function};
\draw[arrow] (x00) -- (FC6410);
\draw[arrow] (FC6410) -- (ReLU20);
\draw[arrow] (ReLU20) -- (FC6430);
\draw[arrow] (FC6430) -- (ReLU40);
\draw[arrow] (ReLU40) -- (FC250);
\draw[arrow] (FC250) -- (driftfunction60);\end{tikzpicture}
    \caption{$f_\theta(z, t)$}
  \label{fig:gradvar-drift-arch}
  \end{subfigure}
  \caption{
    Architecture diagrams for the encoder, 
    deep kernel and drift function used in the 
    Monte-Carlo parameters study.}
\label{fig:gradvar-arch-diagram}
\end{figure}
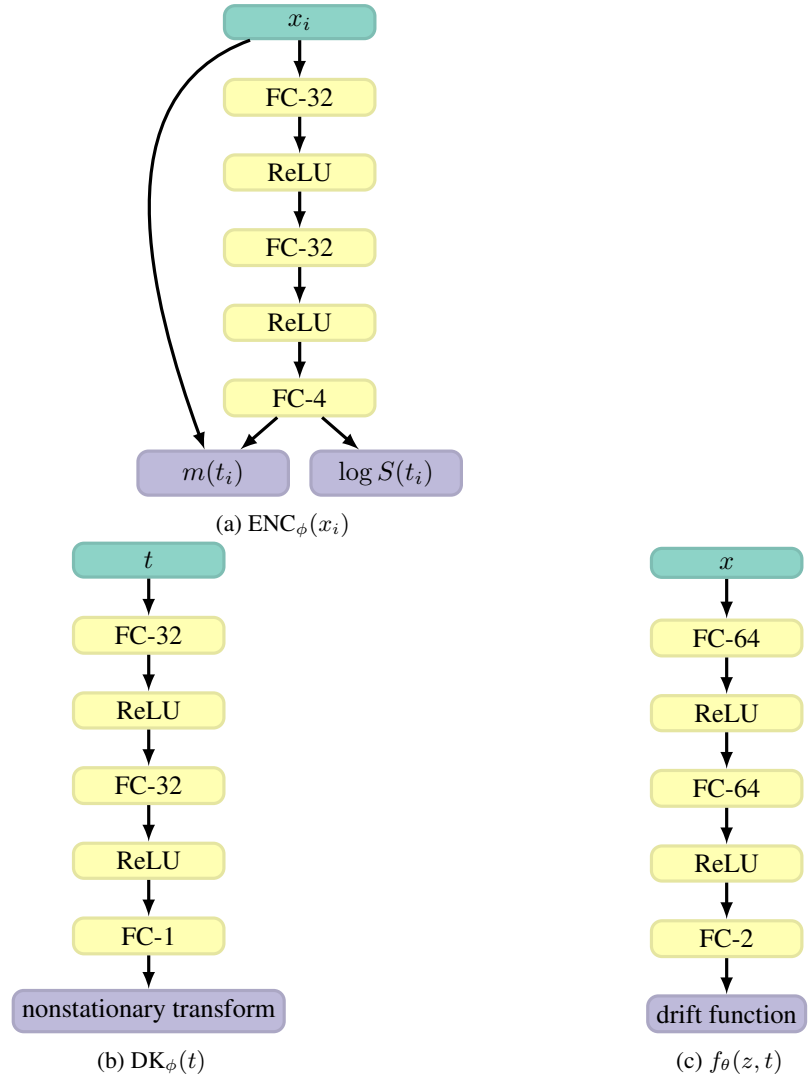

\end{document}